\documentclass[runningheads]{llncs}
\usepackage[T1]{fontenc}
\usepackage{graphicx}
\usepackage[hidelinks]{hyperref}

\usepackage[caption=false]{subfig}

\usepackage{amsmath}
\usepackage{amssymb}
\usepackage{mathtools}
\usepackage{bm}
\usepackage{booktabs}

\usepackage{multirow}

\DeclareMathOperator*{\argmin}{arg\,min}

\begin{document}
\title{Gradient Boosting Mapping for Dimensionality Reduction and Feature Extraction}
\titlerunning{Gradient Boosting Mapping}
\author{Anri Patron
\and
Ayush Prasad \and
Hoang Phuc Hau Luu
\and
Kai Puolamäki}
\authorrunning{A. Patron et al.}

\institute{University of Helsinki, Helsinki, Finland\\
\email{firstname.lastname@helsinki.fi}}

\maketitle

\begin{abstract}
A fundamental problem in supervised learning is to find \emph{a good set of features} or \emph{distance measures}. If the new set of features is of lower dimensionality and can be obtained by a simple transformation of the original data, they can make the model understandable, reduce overfitting, and even help to detect distribution drift.
We propose a supervised dimensionality reduction method {\em Gradient Boosting Mapping} ({\sc gbmap}), where the outputs of weak learners---defined as one-layer perceptrons---define the embedding. We show that the embedding coordinates provide better features for the supervised learning task, making simple linear models competitive with the state-of-the-art regressors and classifiers. We also use the embedding to find a principled distance measure between points. The features and distance measures automatically ignore directions irrelevant to the supervised learning task. We also show that we can reliably detect out-of-distribution data points with potentially large regression or classification errors. {\sc gbmap} is fast and works in seconds for dataset of million data points or hundreds of features. As a bonus, {\sc gbmap} provides a regression and classification performance comparable to the state-of-the-art supervised learning methods.
\end{abstract}


\keywords{dimensionality reduction  \and supervised learning \and boosting \and concept drift.}

\section{Introduction}\label{sec:intro}

In supervised learning, the fundamental task is to learn a regressor or classifier function from a set of features---or distances between the training data points---to the target variable, a real number for regression or a discrete value for classification. Several algorithms for finding such regression/classifier functions have been developed \cite{hastie2009,JMLR:v15:delgado14a}, and the field is already quite mature, at least for ``classical'' regression and classification datasets with up to thousands of features. It has even been argued the improvements in supervised learning technology during the past decades have been marginal at best, and much of the progress has been illusory in the sense that, in real-world applications, the marginal gains may be swamped by other sources of uncertainty, such as concept drift, not considered in the classical supervised learning 
paradigm \cite{hand2006Classifier}. 

However, if the data features are selected carefully, even a simple model---such as linear or logistic regression---may perform competitively to the state-of-the-art supervised learning models. The simple model can be more understandable if the new features are obtained by a relatively simple transformation from the original data. Indeed, well-performing supervised learning models are often {\em black boxes} whose working principles are difficult or impossible to understand, which is problematic in many practical applications \cite{guidotti:2018:a}. 

While there are established methods to estimate the regression or classification errors for data from the same distribution used in the training, there are fewer ways to do so for out-of-distribution data in the presence of {\em concept drift} \cite{gama2014survey}. In other words, it is difficult to estimate the {\em uncertainty of the predictions}, which again, for obvious reasons, may hinder the usability of the supervised learning models in real-world applications where we do not have the ground truth target variables available. 

Our intuition is that in real-world regression or classification problems, only a subset of {\em local directions} in the feature space is relevant for the regression or classification task. We call a direction relevant if it affects the regressor/classifier output. A good feature or distance measure should depend only on the relevant directions in the feature space, and they should be insensitive, e.g., to random nuisance features. We postulate that such features are better than the original features for the supervised learning task and may allow the use of simpler models with competitive performance.
Also, to detect prediction uncertainty due to concept drift, we postulate that only the distances to these relevant directions must be considered.

To this end, we introduce {\em Gradient Boosting Mapping} ({\sc gbmap}), a supervised dimensional reduction method based on the gradient-boosting idea that a complicated function can be learned by a sequence of simpler functions known as weak learners. The weak learners are incorporated sequentially so that at each step, the current weak learner tries to correct the errors (residuals) made by the model learned during the earlier steps. In gradient boosting, the family of weak learners should be specified in advance (possibly with some parameterized form) to pick out the best weak learner using an optimization procedure. The weak learners should have low variance and high bias, meaning that the model family of the weak learners should be sufficiently simple. The boosting procedure then aggregates these simple weak learners to create a strong learner who can capture a highly complicated supervised learning model while being robust enough to avoid overfitting. The most common family of weak learners is decision trees, where the corresponding gradient boosting scheme is eXtreme Gradient Boosting ({\sc xgboost}) \cite{chen2016xgboost}, providing piece-wise constant weak learners that converge to a constant value outside the training data distribution. As a side product of the boosting formulation, {\sc gbmap} can also be used as a regression/classification model whose performance is competitive with state-of-the-art models such as {\sc xgboost}.

{\em Contributions} of this paper are: (i) we define a boosting-based embedding method {\sc gbmap} and show its theoretical properties (Sect. \ref{sec:theory}), (ii) we show the embedding is fast to compute (e.g., the runtime on a dataset with a million data points of a dimension of $25$ is around 15 seconds (Sect. \ref{sec:scaling}), and that it performs comparatively to the state-of-the-art regression/classification models (Sect. \ref{sec:regression}) although regression/classification performance is not our primitive objective, (iii) show that the embedding features and the induced distance can be used as features for classification/regression (Sect. \ref{sec:features}), and that (iv) {\sc gbmap} can detect outlier data points with potentially large regression/classification errors without seeing the ground truth target values (Sect. \ref{sec:ood}). Finally, we conclude the paper with a discussion of potential other applications for {\sc gbmap} (Sect. \ref{sec:other}) and a future outlook (Sect. \ref{sec:conclusion}). We included supplementary material in the appendix that is not necessary to understand the contributions of the paper but gives, e.g., additional experimental results that complement those in the paper (Sects. \ref{sec:linear}--\ref{sec:vis2}).

\section{Related Work}\label{sec:related}

The standard methods for dimensionality reduction/data visualization in machine learning are unsupervised methods such as Principal Component Analysis ({\sc pca}) \cite{wold1987Principal} or t-distributed Stochastic Neighbor Embedding ({t-sne}) \cite{vandenmaaten2008} that do not consider the target variable at all; the advantage is simplicity and that these methods can also be used for unlabeled data, but the obvious drawback is that if most features are irrelevant for the supervised learning task, the embedding may not be helpful. For this reason, supervised methods try to take the target variable into account. These methods include Linear Optimal Low rank projection ({\sc lol}) \cite{vogelstein2021Supervised} and {\sc ivis} \cite{szubert2019Structurepreserving}.
{\sc lol} works by calculating the mean of each class and the difference between these means. It then computes the class-centered covariance matrix, using the top eigenvectors and the mean difference to construct the embeddings. {\sc ivis} is a parametric method that uses a Siamese neural network architecture with a triplet loss function to preserve distances of data points in the embedding space.

The term {\em concept drift} means phenomena where the data distribution changes over time; for surveys, see \cite{gama2014survey,zliobaite2016Overview,lu2018Learning}. Most concept drift detection methods need access to the ground truth target variables, especially those of test data points. However, in some real-world applications, we can only have access to the training labels, while the test labels are missing \cite{oikarinen2020Supervised}, and we have to evaluate drift based solely on the model prediction, training labels, and the covariate structures. Methods that detect drift merely based on covariate distribution shifts \cite{shao2014Prototypebased,qahtan2015PCABased} suffer from irrelevant features, resulting in a high false alarm rate \cite{sethi2017reliable}. 
Therefore, taking into account the target variable (training labels, model prediction) is highly important. The literature is limited in this line of work and primarily focuses on concept drift in classification \cite{sethi2017reliable,lindstrom2013Drift}. Recently, \cite{oikarinen2020Supervised} proposed a drift detection algorithm for {\em time series} regression problems. In this work, we demonstrate an added benefit of {\sc gbmap} that its embedding distance can be used internally for drift detection, with high reliability for regression and classification tasks. It is worth noting that this is an efficient built-in concept drift detection of {\sc gbmap}, eliminating the need for an external package to detect drift in {\sc gbmap}.

For the boosting literature, we refer to a textbook such as  \cite{hastie2009}, where we have borrowed the notation used in this paper. Our ideas are mostly related to gradient boosting, where at each boosting iteration, the weak learner is updated to minimize the loss induced by the current strong learner \cite{friedman2001greedy}, which is the accumulation of all previous weak learners. This work uses a family of perceptrons as weak learners to extract simple softplus-like functions from a complicated target function sequentially. These simple functions serve as interpretable feature extractors, as explained in Sect. \ref{sec:explainability}. A related method is {\sc xgboost} \cite{chen2016xgboost}, which uses tree-based regressors and classifiers as weak learners; see Sect. \ref{sec:relboost} for a more detailed discussion.

\section{Theory and Methodology}\label{sec:theory}

\subsection{Definition of Our Model}\label{sec:def}

Assume we have $n$ {\em training data} points $({\bf x}_1,y_1),\ldots,({\bf x}_n,y_n)$ drawn independently from a fixed but usually unknown distribution $({\bf x}_i,y_i)\sim F$, where the target variable is $y_i\in{\mathbb{R}}$ for regression, $y_i\in\{-1,+1\}$ for binary classification, the covariate is ${\bf x}_i\in{\mathbb{R}}^p$, and $i\in[n]=\{1,\ldots,n\}$. ${\bf X}\in{\mathbb{R}}^{n\times p}$ denotes the data matrix where the rows correspond to observations, ${\bf X}_{i\cdot}={\bf x}_i^\intercal$. In this work, we want to predict the value of $y$ given ${\bf x}$ by fitting a model $f:{\mathbb{R}}^p\to{\mathbb{R}}$ to the training data. For regression problems, the prediction $\hat y$ is then given directly by $f$ as $\hat y=f({\bf x})$. For classification problems, the probability of $+1$ is given by $\hat p(y=+1\mid{\bf x})=\sigma(f({\bf x}))$, where $\sigma(z)=1/(1+e^{-z})$ is the sigmoid function (see Sect. \ref{sec:linear}), and the predicted class by the sign as $\hat y={\rm{sign}}\left(f({\bf x})\right)$.

We define $f$ to be an ensemble of $m$ weak learners $f_j:{\mathbb{R}}^p\to{\mathbb{R}}$, where $j\in[m]$, and an initial model $f_0({\bf x})$, which be any function; we use by default $f_0({\bf x})=0$. Other choices of $f_0$ are possible depending on the application, as discussed in Sect. \ref{sec:f0}. $f$ is given by
\begin{equation}\label{eq:f}
f({\bf x})=\sum\nolimits_{j=0}^m{f_j({\bf x})}.
\end{equation}

Here---even though we end up with well-performing regressors and classifiers---our main motivation is not to make the best regressor or classifier but instead use the weak learners to find a lower-dimensional embedding. For this reason, we define the weak learners to be linear projections with non-linearity,
\begin{equation}\label{eq:fj}
f_j({\bf x})=a_j+b_jg({\bf w}_j^\intercal{\bf x}),
\end{equation}
where the intercept and slope terms $a_j\in{\mathbb{R}}$, $b_j\in\{-1,+1\}$, and the projection vectors ${\bf w}_j\in{\mathbb{R}}^p$ are learned from data, as described later in Sect. \ref{sec:learn}. We assume that ${\bf x}$ contains an intercept term (a feature which equals unity) if necessary. The nonlinearity is given by the function $g:{\mathbb{R}}\to{\mathbb{R}}$. In this work, we use softplus $g_{soft+}^\beta(z)=\log{\left(1+e^{\beta z}\right)/\beta}$, the smooth variant (for more efficient optimization) of the ReLU $\lim\nolimits_{\beta\to\infty}{g_{soft+}^\beta(z)}=\max{(0,z)}$.

Formally, we want to find $f$ such that the generalization error defined by
$
L=E_{({\bf{x}},y)\sim F}\left[
l(y,f({\bf x}))
\right]
$
is minimized, where $l:{\mathbb{R}}\times{\mathbb{R}}\to{\mathbb{R}}_{\ge 0}$ is a predefined loss function. We use the quadratic loss $l_{quardatic}(y,y')=(y-y')^2$ for regression problems and logistic loss $l_{logistic}(y,y')=\log{\left(1+e^{-yy'}\right)}$ for classification problems. The logistic loss (used by logistic regression) can be viewed as a smooth variant (again, for easier optimization) of the hinge loss $l_{hinge}(y, y')=\max{(0,1-yy')}$ (used, e.g., by support vector classifiers).

\subsection{Learning the Model Parameters}\label{sec:learn}

The problem can be cast into $2^m$ continuous non-convex optimization problems ($2^m$ possibilities of $b_j$'s), which is NP-hard. The boosting scheme offers a greedy stage-wise search approach to explore this combinatorial state space. We can also view Eq. \eqref{eq:f} as a discretized ensemble of $m$ weak learners, 1-layer perceptrons. 

Given the initial model $f_0$ (by default, $f_0({\bf x})=0$), loss function $l$, non-linearity $g$, and the training data, we can use the boosting ideas to find our model's parameters $a_j$, $b_j$, and ${\bf{w}}_j$. The boosting consists of $m$ iterations. At $j$th iteration, we find the model parameters by solving the following optimization problem:
\begin{equation}\label{eq:argmin}
a_j,b_j,{\bf w}_j=
\argmin\nolimits_{a_j,b_j,{\bf w}_j}{\left(
{\cal L}_j+{\cal R}_j\right)},
\end{equation}
where the empirical loss is
\begin{equation}\label{eq:empirical}
{\cal L}_j=n^{-1}\sum\nolimits_{i=1}^n{
l\left(y_i,
\sum\nolimits_{k=0}^j{f_k({\bf x}_i)}
\right)
}.
\end{equation}
Recall that $f_j({\bf x})$ is parameterized by $a_j$, $b_j$, and ${\bf{w}}_j$. Here, we use a Ridge regularization for numerical stability, given by
$
{\cal R}_j=
\lambda\sum\nolimits_{k=1}^p{{\bf w}_{jk}^2/p}$.
We use the limited-memory Broyden–Fletcher–Goldfarb–Shanno algorithm (LBFGS) implementation in Python {\sc jaxopt} library \cite{jaxopt_implicit_diff} to solve the optimization task in Eq. \eqref{eq:argmin} for $b_j=-1$ and $b_j=+1$ separately, choosing the value of $b_j$ that leads to the smallest loss. We find all of the functions $f_j$ by repeating the optimization of Eq. (3) for all values of $j$ starting from $1$ and ending in $m$.

\subsection{Embedding of Data Points}\label{sec:emb}

The most interesting advantage of our proposed method is that it can be used for supervised dimensionality reduction for various purposes. The {\em embedding} $\phi:{\mathbb{R}}^p\to{\mathbb{R}}^m$ of a data point in ${\bf x}$ is given simply by
\begin{equation}\label{eq:emb}
\phi({\bf x})=\left(f_1({\bf x}),\ldots,f_m({\bf x})\right)^\intercal.
\end{equation}
We define the {\em path distance} between data points ${\bf x},{\bf x}'\in{\mathbb{R}}^p$ as the total absolute change in the target value in a line drawn between data points:
\begin{equation}\label{eq:dpath}
d_{path}({\bf x},{\bf x}')=
\int\nolimits_0^1{\left|\frac{\partial f(t{\bf x}'+(1-t){\bf x})}{\partial t}\right| dt},
\end{equation}
where the function $f$ is defined in Eq. \eqref{eq:f}. 
We define the {\em embedding distance} between data points as Manhattan distance in the embedding space,
\begin{equation}\label{eq:demb}
 d_{emb}({\bf x},{\bf x}')=\sum\nolimits_{j=1}^m{\left|f_j({\bf x})-f_j({\bf x}')\right|}.
\end{equation}
The embedding distance can be considered an upper bound for a path distance.
The distances satisfy
\begin{equation}\label{eq:bounds}
\left|f({\bf x}')-f({\bf x})\right|\leq
d_{path}({\bf x},{\bf x}')\leq
d_{emb}({\bf x},{\bf x}').
\end{equation}
The lower bound of Eq. \eqref{eq:bounds} is tight when $f(t{\bf x}'+(1-t){\bf x})$ is non-decreasing or non-increasing function in $t\in[0,1]$. The upper bound is tight when, additionally, $f_j(t{\bf x}'+(1-t){\bf x})$ are for all $j\in[m]$ non-decreasing (or all non-increasing) functions of $t$; see Lemma \ref{lem:bounds} for a proof and Sect. \ref{sec:vis2} for visualizing the embedding space.

In the next section, we will experimentally demonstrate the usefulness of the above properties and provide some theoretical footing for them.

\subsection{Computational Complexity}\label{sec:complexity}

Solving the optimization problem of Eq. \eqref{eq:argmin} takes $O(npk)$ time, where $k$ is the number of optimization iterations the LBFGS algorithm takes (we take $k$ to be constant since, in practice, we restrict it to a reasonable value). The time complexity of the optimization process described in Sect. \ref{sec:learn} is therefore $O(npm)$.

\subsection{If There Is no Non-Linearity}
\label{sec:nonlin}

If the model is linear, i.e., $g(z)=z$, the optimization problem of Eq. \eqref{eq:argmin} for $f_1({\bf x})$ reduces to OLS linear regression for regression with a quadratic loss function and to the standard logistic regression for classification with the logistic loss function; see Sect. \ref{sec:linear} for a proof.

\subsection{Relation to Other Boosting Algorithms}
\label{sec:relboost}

Almost all boosting algorithms have a model in the form of Eq. \eqref{eq:f}, consisting of $m$ weak learners with high bias and low variance, which are trained sequentially (as in Sect. \ref{sec:learn}) to model the errors of subsequent modeling iterations. Typical choices for the loss function are, in addition to those mentioned in Sect. \ref{sec:def}, exponential loss $l_{exp}(y,y')=e^{-yy'}$ used by AdaBoost \cite{freund1995desiciontheoretic}, and hinge loss $l_{hinge}(y,y')=\max{(0,1-yy')}$ which can be thought of as non-smooth variant of the logistic loss. In many state-of-the-art algorithms, such as {\sc xgboost}, the weak learner is often a tree-based classifier and not a perceptron-like entity of Eq. \eqref{eq:fj}.

In principle, we could define embedding and the embedding distance similarly for any such boosting algorithm. Instead of tree-based approaches, {\sc gbmap} uses weak learners based on a simple perceptron structure to generate the embedding coordinates, which provide new features via simple transformation of the original data, automatically ignoring non-relevant directions. These coordinates have the following desirable properties over the tree-based weak learners: the perceptrons provide smooth (not piece-wise constant) transformation of the covariate space; thus, neighboring points are not likely to overlap in the embedding. Due to the construction, the predicted target value increases or decreases linearly when we move outside the training data, with the slope being proportional to the importance of the direction to the supervised learning task, helping us to detect data points with potentially large prediction errors.

\section{Numerical Experiments}

\subsection{Datasets and Algorithms}
\label{sec:dataalgo}

\begin{table}[h]
\caption{Datasets used in the experiments.}
\label{tab:reg-data}
\begin{center}
\begin{small}
\begin{sc}
\begin{tabular}{lrccc}
\toprule
Dataset & $n$ &  $p$ & Target \\
\midrule
abalone & 4\ 177 & 8 & $\mathbb{R}$ \\
air quality &7\ 355 & 11 & $\mathbb{R}$ \\
autompg & 392 & 8 & $\mathbb{R}$ \\
california & 20\ 640 & 7 & $\mathbb{R}$ \\
concrete & 1\ 030 & 8 & $\mathbb{R}$ \\
cpu-small & 8\ 192 & 12& $\mathbb{R}$ \\
qm9-10k & 10\ 000 & 27 & $\mathbb{R}$ \\
superconductor & 21\ 263 & 81 & $\mathbb{R}$ \\
synth-cos-r & 200\ 000 & 200 & $\mathbb{R}$ \\
wine-red & 1\ 599 & 11 & $\mathbb{R}$ \\
wine-white & 4\ 898 & 11 & $\mathbb{R}$ \\
breast-cancer & 569 & 30 & $ \pm 1$\\
diabetes & 768 & 8 & $\pm 1$\\
eeg-eye-state & 14\ 980 & 14 & $\pm 1$\\
german-credit & 1\ 000 & 20 & $\pm 1$\\
higgs-10k & 10\ 000 & 28 & $\pm 1$\\
synth-cos-c & 200\ 000 & 200 & $\pm 1$\\
\bottomrule
\end{tabular}
\end{sc}
\end{small}
\end{center}
\vskip -0.1in
\end{table}

We obtained ten real-world regression and five classification datasets shown in Tab. \ref{tab:reg-data} from the {\sc UCI} repository \cite{kellyuci}, {\sc OpenML} \cite{OpenML2013}, and {\sc scikit-learn} \cite{scikit-learn}, described in more detail in Sect. \ref{sec:data}. 

We additionally used a synthetic data ${\textsc{synth-cos-r}}(n,p)$ and ${\textsc{synth-cos-c}}(n,p)$ obtained as follows. We generated {\sc synth-cos-r} by first sampling the data matrix ${\bf X}\in{\mathbb{R}}^{n\times p}$ from a Gaussian distribution with zero mean and unit variance and let ${\bf y}=\alpha \cos{({\bf X})}{\bf u}$, where $\cos{({\bf X})}\in{\mathbb{R}}^{n\times p}$ denotes element-wise $\cos$, $\alpha \in \mathbb{R}$, and ${\bf u}\in{\mathbb{R}}^p$ is a random unit vector. Finally, we center ${\bf y}$ so that it has zero means. We used $\alpha =5$ throughout this work. The classification dataset {\sc synth-cos-c} is obtained by passing the regression target values to the logistic function to obtain class probabilities (see Sect. \ref{sec:linear}) and then randomly sampling the classification target values $\{-1,+1\}$ based on these probabilities. The ``default'' values are $n=2\cdot 10^5$ and $p=200$, and we have also included the intercept term in ${\bf X}$.

The {\sc drift} datasets for Sect. \ref{sec:ood} were obtained as follows. Given a real-world dataset, we split it into training and test datasets in a way that induces drift. We first find the most important feature to split the dataset that potentially produces drift. Specifically, for each feature, we split the data set into \texttt{a} and \texttt{b} of equal size according to the increasing order of that feature. We then further randomly split  \texttt{a} into  \texttt{a1},  \texttt{a2} where   \texttt{a1} and  \texttt{a2}
are of roughly the same size. We then drop the chosen feature from all mentioned subsets and train a {\sc gbmap} regressor (or classifier) on  \texttt{a1} and compute the loss (squared loss if regression, logistic loss if classification) of its predictions on  \texttt{a2} and  \texttt{b}. By construction, \texttt{a2} is assumed to come from the same distribution as the training data, while  \texttt{b} potentially has some drift in its distribution. We then use the difference between the loss on \texttt{b} and the loss on  \texttt{a2} to measure how much drift the split induces and pick the feature that maximizes the drift.

We preprocessed all real-world datasets by subtracting the mean, dividing by standard deviation, one-hot encoding the categorical covariates, and adding the intercept terms (for {\sc gbmap}).
We used OLS and logistic regression for the baseline comparisons in regression and classification tasks. We selected {\sc xgboost} as the main comparison for both tasks. We chose the OLS and logistic regression implementations from {\sc scikit-learn} \cite{scikit-learn} and used {\sc xgboost} from {\sc xgboost} Python library \cite{chen2016xgboost}.

We selected model hyperparameters by random search with 5-fold cross-validation (for OLS, no tuning is required). We gave the random search a budget of 100 iterations for each algorithm. For hyperparameter tuning, {\sc synth-cos-r} and {\sc synth-cos-c} were downsampled to $n=6\cdot 10^4$, and random search was given 10 iterations for efficiency. For logistic regression, we tuned only the $L_2$-penalty term. For {\sc xgboost}, we selected the number of boosting iterations, max tree depth and subsampling rate for hyperparameter tuning. Finally, for {\sc gbmap}, we selected $m$, $\beta$, $\lambda$, and the maximum iterations for LBFGS. The ranges or distributions for parameter values are specified in Tab. \ref{tab:hp-grid}.

\begin{table}[h]
\caption{Hyperparameter ranges for {\sc lr} (logistic regression, {\sc xgboost} and {\sc gbmap}. $U(\cdot)$ denotes an uniform distribution.}
\label{tab:hp-grid}
\begin{center}
\begin{sc}
\begin{tabular}{lccc}
\toprule
model & parameter & range \\
\midrule
lr & $L_2$-penalty & $U(10^{-5},1$) \\
xgboost & n-boosts & 100--2000 \\
xgboost & maxdepth & 1--5 \\
xgboost & subsample & $[0.5, 0.6, ..., 1]$ \\
gbmap & $m$ (n-boosts) & 2--150 \\
gbmap & softplus $\beta$ & 1--20 \\
gbmap & maxiter & [200, 400] \\
gbmap & $\lambda$ & $U(0,10^{-2})$ \\
\end{tabular}
\end{sc}
\end{center}
\end{table}

 We also compared the {\sc gbmap} distance to Euclidean distance in the supervised learning setting. To this end, we applied $k$-Nearest Neighbors ($k$-NN) with Euclidean distance and {\sc gbmap} $L_1$ distance ($k=10$ for both variants). We chose the {\sc gbmap} hyperparameters similarly as above by random search but used only 50 iterations, and $m$ was sampled from $[2,50]$. We downsampled the real-world datasets to $n=10^4$ and used ${\textsc{synth-cos-r}}(10^4, 20)$ and ${\textsc{synth-cos-c}}(10^4, 20)$ for computational reasons.

\subsection{Scaling}\label{sec:scaling}

We conducted an experimental evaluation of the runtime performance (Tab. \ref{tab:scaling}) of {\sc gbmap}, comparing it with {\sc lol}, {\sc ivis} and {\sc pca}. The evaluation was performed on a computing cluster, on which each run was allocated two processors and 64GB of RAM. We generated synthetic using ${\textsc{synthetic-cos-c}}(n,p)$, with varying parameters $n$ and $p$. The algorithm runtimes were averaged over ten repeated runs. We set $m=2$ in {\sc gbmap}, and the number of embedding components was also set to $2$ for all other methods. We set $\lambda=10^{-3}$, $\beta=1$ and {\sc maxiter} $=100$ for {\sc gbmap}.  For {\sc ivis}, we used the default number of epochs $1000$ from the official implementation with an early stopping set to $5$ epochs. {\sc gbmap} scales roughly as $O(np)$ and is quite fast. On data dimensionality scaling {\sc gbmap} is roughly comparable to {\sc lol}. Meanwhile, {\sc ivis} scales poorly, taking an order of magnitude longer to run.

\begin{table}[h]
\caption{Wall clock runtimes in seconds, averaged over ten runs. Lower values are better.}
\label{tab:scaling}
\begin{center}
\begin{small}
\begin{sc}
\begin{tabular}{rrrrrr}
\toprule
$n$ & $p$ & {\sc gbmap} & {\sc ivis} & {\sc lol} & {\sc pca} \\
\midrule
\( 10^5 \)& 100 & 5.8 & 276.8 & 1.7 & 0.9 \\
\( 10^5 \)& 200 & 4.5 & 270.1 & 3.1 & 1.5 \\
\( 10^5 \) & 400 & 7.8 & 331.0 & 6.5 & 3.1 \\
\( 10^5 \) & 800 & 22.5 & 499.5 & 17.6 & 8.5 \\
\( 10^5 \) & 1600 & 79.1 & 932.5 & 53.9 & 27.3 \\
\( 10^5 \) & 3200 & 204.2 & 1688.2 & 195.8 & 106.3 \\

\(10^5\)         & 25 & 1.4 & 286.4 & 0.2 & 0.1 \\
\(5 \cdot 10^5\) & 25 & 14.3 & 1719.3 & 1.1 & 0.5 \\
\(10^6 \)        & 25 & 14.2 & 4710.6 & 2.3 & 0.9 \\
\(5 \cdot 10^6\) & 25 & 174.8 & 18223.8 & 11.2 & 4.9 \\
\(10^7\)         & 25 & 204.9 & 26623.5 & 22.3 & 9.8 \\
\bottomrule
\end{tabular}
\end{sc}
\end{small}
\end{center}
\vskip -0.1in
\end{table}

\subsection{Regression and Classification}\label{sec:regression}

The regression and classification experiments' results ($R^2$ and accuracy) are in the first block of columns of Tab. \ref{tab:reg}. We see that  {\sc gbmap} is roughly comparable to {\sc xgboost} across all datasets when we use {\sc gbmap} directly as a regressor/classifier. If we use the embedding distance produced by {\sc gbmap} in Eq. \eqref{eq:demb} as the proposal distance for $k$-NN, it can improve the performance of $k$-NN (the second block of columns of Tab. \ref{tab:reg}). For ``easier'' datasets where almost all features are relevant, $k$-NN gives comparable results with the Euclidean and embedding distances of Eq. \eqref{eq:demb}, however, for more realistic datasets where this is not the case (e.g., {\sc california}, {\sc concrete}, {\sc qm9-10k}, {\sc synth-cos-r}, {\sc synt-cos-c}) the embedding distance outperforms the Euclidean one. Indeed, if we added random features to any datasets, the advantage of $k$-NN with the embedding distance would become even more prominent.

\begin{table*}[h]
\caption{Regression results ($R^2$, top) and classification results (accuracy, bottom). Columns {\sc gbmap}, {\sc linreg} (OLS or logistic regression), and {\sc xgboost} show the performance of the respective regression/classification algorithms. The column {\sc knn} shows the $k$-NN performance with Euclidean distance and column {\sc knn-gbmap} with the embedding distance of Eq. \eqref{eq:demb}. 
AUC of {\sc gbmap} and {\sc euclid} drifters are shown on the right-hand columns. An AUC of $0.5$ corresponds to random guessing.
Higher values are better for all columns.}
\label{tab:reg}
\begin{center}
\begin{small}
\begin{sc}
\begin{tabular}{lccc|cc|cc}
\toprule
 dataset & gbmap & linreg & xgboost & knn & knn-gbmap  & auc-gbmap & auc-euclid \\
\midrule
abalone & \textbf{0.577} & 0.526 & 0.535 & 0.517 & {\bf 0.556} & 0.95 & \textbf{0.96} \\
airquality & \textbf{0.922} & 0.905 & 0.912 & {\bf 0.914} & 0.912 & \textbf{0.98} & 0.96 \\
autompg & \textbf{0.863} & 0.784 & 0.807 & 0.840 & {\bf 0.855}& \textbf{0.94} & 0.91 \\
california & 0.755 & 0.602 & \textbf{0.841} & 0.689 & {\bf 0.767} & \textbf{0.74} & 0.64 \\
concrete & 0.907 & 0.582 & \textbf{0.917} & 0.634 & {\bf 0.871} & 0.73 & \textbf{0.81} \\
cpu-small & 0.974 & 0.713 & \textbf{0.978} & 0.945 & {\bf 0.974} & \textbf{0.91} & 0.84 \\
qm9-10k & 0.701 & 0.471 & \textbf{0.707} & 0.574 & {\bf 0.689} & \textbf{0.96} & 0.87 \\
superconductor & 0.874 & 0.730 & \textbf{0.911} & 0.852 & {\bf 0.869} & \textbf{0.98} & 0.97 \\
synth-cos-r & \textbf{0.452} & 0.00 & 0.025 & 0.411 & {\bf 0.748} & $-$&$-$\\
wine-red & \textbf{0.396} & 0.358 & 0.395 & 0.292 & {\bf 0.362} & \textbf{0.85} & 0.77 \\
wine-white & 0.395 & 0.290 & \textbf{0.432} & 0.335 & {\bf 0.363} & \textbf{0.95} & 0.89 \\
\midrule
breast-cancer & 0.965 & 0.953 & \textbf{0.971} & 0.947 & {\bf 0.982} & 0.51 & \textbf{0.59} \\
diabetes & 0.766 & \textbf{0.775} & 0.758 & 0.727 & {\bf 0.784} & 0.63 & \textbf{0.75} \\
eeg-eye-state & 0.755 & 0.642 & \textbf{0.944} & 0.773 & {\bf 0.801} & 0.66 & \textbf{0.74} \\
german-credit & 0.730 & \textbf{0.767} & 0.733 & 0.703 & {\bf 0.713} & \textbf{0.65} & 0.63 \\
higgs-10k & 0.681 & 0.639 & \textbf{0.703} & 0.615 & {\bf 0.661} & \textbf{0.70} & 0.67 \\
synth-cos-c & \textbf{0.583} & 0.50 & 0.582 & 0.593 & {\bf 0.698} & $-$ & $-$\\
\bottomrule
\end{tabular}
\end{sc}
\end{small}
\end{center}
\vskip -0.1in
\end{table*}

\subsection{Supervised Learning Features}\label{sec:features}

We evaluated {\sc gbmap} embeddings similarly to \cite{vogelstein2021Supervised} by measuring the generalization error of a supervised model trained on embeddings. As the supervised learning models, we selected OLS linear regression for regression tasks and logistic regression for classification to demonstrate that even simple linear models can perform competitively with good features. We used half of the data for training and the other half to estimate the generalization error. The generalization error ($R^2$ or accuracy) was calculated over five repeated splits. We compared the {\sc gbmap} embeddings in a supervised learning setting to other supervised embedding methods such as {\sc lol}, {\sc ivis}, and unsupervised {\sc pca}. We applied {\sc lol} only for classification datasets, as the method assumes class-label target values.

For {\sc ivis}, we set the maximum epochs to 1000 with early stopping set to $5$ epochs, and we used mean absolute error as the supervision metric. We selected {\sc gbmap} $\beta$ using random search with 10 iterations and set $\lambda=10^{-3}$ and maximum iterations for LBFGS optimizer $\text{\sc maxiter}=400$. For the experiment, we selected datasets where the linear model compared poorly to more complex models (Sect. \ref{sec:regression}). We split the data randomly into half; one half was used to train the model and find the embedding, and the other was used to estimate generalization error. The above was repeated five times to control the randomness.

Fig. \ref{fig:reg_features_reg} presents the improved performance of linear regression and logistic regression when using new features extracted by various methods. The complete results for all datasets are in Sect.  \ref{sec:appendix_features}. We can see that {\sc gbmap} provides good features for supervised learning, comparable to {\sc ivis}. On the other hand, {\sc lol} cannot improve the logistic regression at all. Likewise, as expected, {\sc PCA} is merely a dimensionality reduction method; hence, its performance is worse than the baselines that use all original features.

Using {\sc gbmap} features, simple linear models can narrow the gap between complex and black-box models. Note that the {\sc gbmap} transformation is not restricted by the number of covariates $p$, unlike {\sc pca} and {\sc lol}.

\begin{figure}[h]
    \centering
    \subfloat[{\sc qm9-10k}\label{fig:reg_features_reg:a}]{
     
	\includegraphics[width=0.45\textwidth]{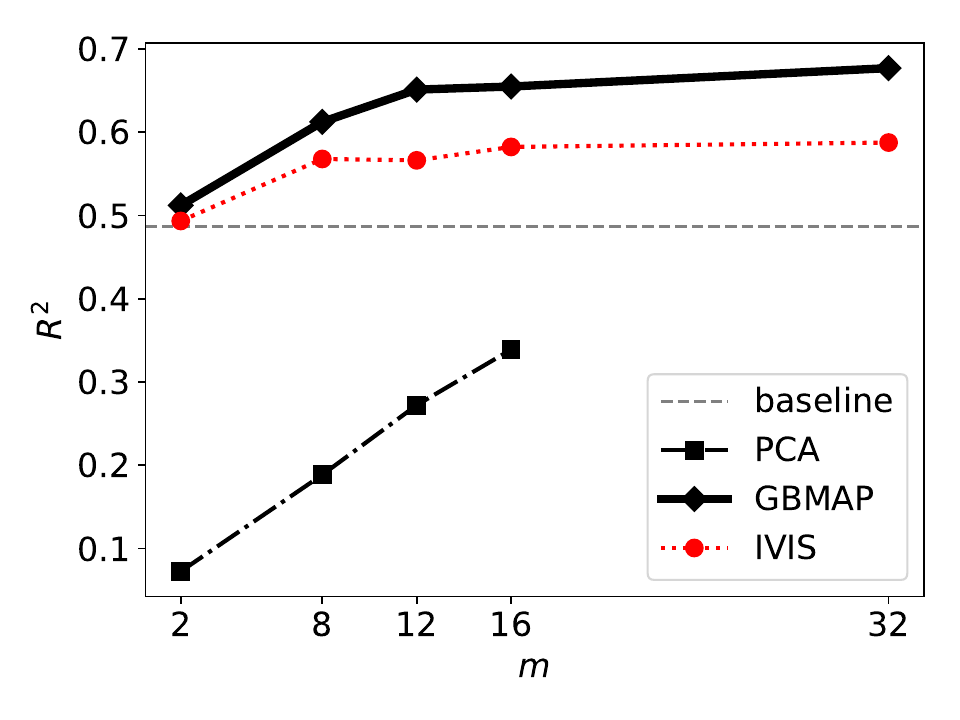}
	}
 \subfloat[{\sc higgs-10k}\label{fig:reg_features_reg:b}]{
	\includegraphics[width=0.45\textwidth]{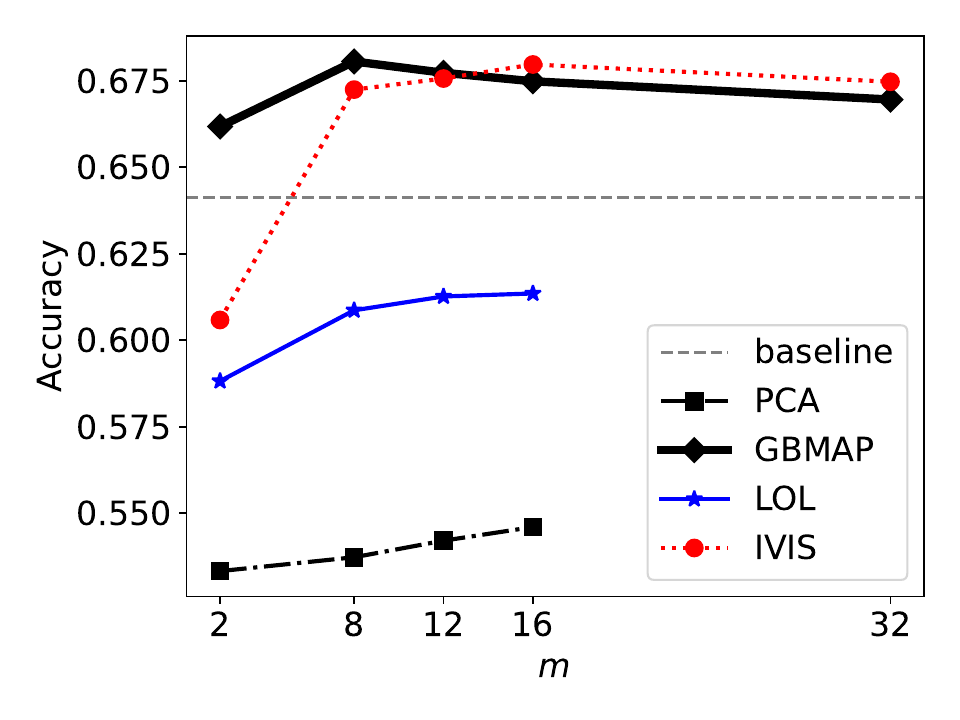} 
	}
    \caption{Embeddings as features for OLS regression ({\sc qm9-10k}, $R^2$, (a)) and logistic regression ({\sc higgs-10k}, accuracy, (b)). The {baselines} are OLS regression (a) and logistic regression (b) trained on the original data. The {\sc gbmap} transformation is not restricted by the data $p$ (the number of covariates), unlike {\sc pca} and {\sc lol} and can be used to transform the data to arbitrary dimensions. The {\sc higgs-10k} and {\sc qm9-10k} has $p<32$ hence, the lines for {\sc pca} and {\sc lol} end at $m=16$.}
\label{fig:reg_features_reg}
\end{figure}

\subsection{Out-of-Distribution Detection}
\label{sec:ood}

\begin{figure*}[h]
     \centering
     \subfloat[{\sc gbmap} drifter\label{fig:gbmapdrifter:a}]{
	\includegraphics[width=0.48\textwidth]{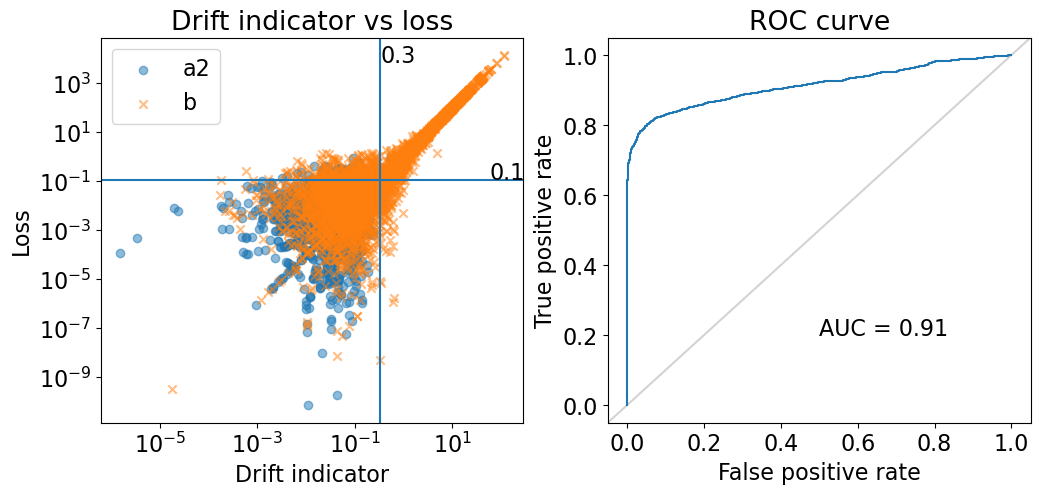} 
	}
 \subfloat[{\sc euclid} drifter\label{fig:gbmapdrifter:b}]{
	\includegraphics[width=0.48\textwidth]{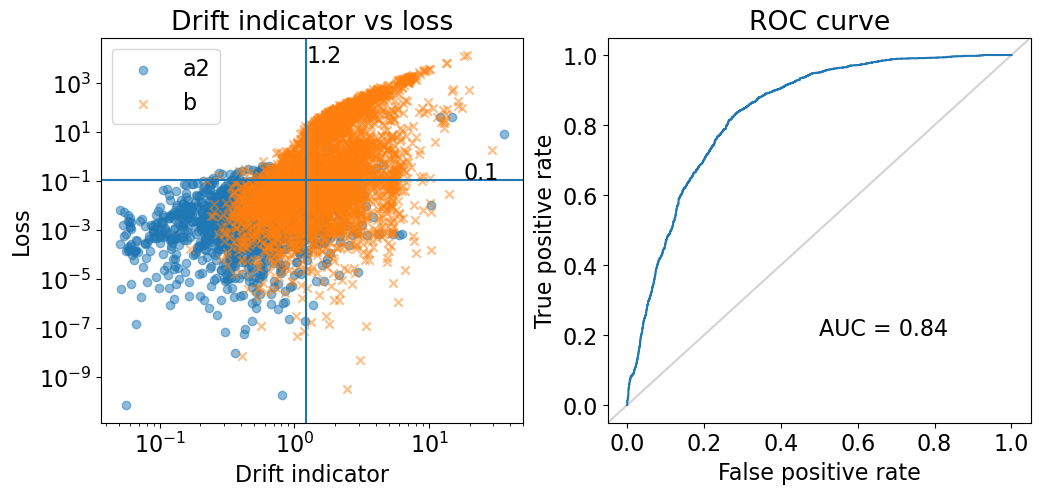} 
	}
	\caption{The drift indicator against the loss figures (left) and ROC curves (right) for the {\sc gbmap} (left column) and {\sc euclid} (right column) drifters on {\sc cpu-small} dataset. The horizontal line denotes our chosen concept drift threshold, while the vertical line indicates the drift indicator threshold that leads to maximal $F_1$ score (not used in the analysis), as in \cite{oikarinenDetectingVirtualConcept2021}. The blue spheres are the data from the in-distribution set \texttt{a2}, and the orange crosses are the data from the out-of-distribution set \texttt{b}. The {\sc gbmap} drifter detects drift with a high AUC of $0.91$, while the {\sc euclid} has AUC of $0.84$}
	\label{fig:gbmapdrifter}
\end{figure*}

In this subsection, we show how the embedding distance can be used to detect concept drift, i.e., when the test data follows a different distribution to the training data, resulting in deteriorating prediction performance. Following \cite{oikarinenDetectingVirtualConcept2021}, we frame the drift detection problem as a binary classification problem; we say there is concept drift if a regression or classification loss on a new data point exceeds a pre-defined threshold. We introduce concept drift in real-world datasets using the splitting scheme described in Sect. \ref{sec:dataalgo}, where we denote the training set by \texttt{a1}, the in-distribution test set \texttt{a2}, and the out-of-distribution set \texttt{b}. We then train {\sc gbmap} on \texttt{a1} and want to detect drift on the test datasets \texttt{a2} and \texttt{b}. We define a point as being out-of-distribution if the {\sc gbmap} error (squared loss if regression, logistic loss if classification) at that point is larger than a threshold here we use $95 \%$ quantile of the losses on \texttt{a2}. For real-world applications, the threshold would be set by a domain expert.

Our idea to detect drift is as follows. {\sc gbmap} and a $k$-NN model should give similar outputs within the training distribution. However, when moving outside the distribution, the {\sc gbmap} prediction increases or decreases linearly, with the slope depending on the importance of the direction to the supervised learning task. On the other hand, the $k$-NN prediction remains roughly constant when we move far outside the training data distribution. Therefore, we propose using the difference between the {\sc gbmap} and $k$-NN predictions as a drift indicator: if the prediction depends strongly on the prior modeling assumptions, we risk a larger-than-expected loss. 

As the ground-truth loss, in regression, we use the squared difference between the actual target and the {\sc gbmap} prediction $(y-f(x))^2$ and in classification, we use the squared difference between the actual score $s$ and the {\sc gbmap} prediction. Here, by $s$, we refer to the values passed to the sigmoid to get class probabilities. In real-world datasets, as the actual score is unavailable, we use the score function estimated via logistic regression, i.e., $\hat{s}_i = {\bf w}^T {\bf x}_i$, where we obtain the parameters ${\bf w}$ by optimization over complete data ${\bf w} = \text{arg min}_{\bf w} \ n^{-1} \sum_{i=1}^n \log(1+e^{-y_i \hat{s}_i})$.

Given a test point ${\bf x}^*$, we denote by $\mathcal{N}({\bf x}^*)\subseteq[n]$, where $\left|\mathcal{N}({\bf x}^*)\right|=k$, the $k$ nearest training data points to ${\bf x}^*$ using the distance produced by {\sc gbmap} embedding of Eq. \eqref{eq:demb}. For regression, we use $f_{kNN}({\bf x}^*)=\sum_{i \in \mathcal{N}({\bf x}^*) }{y_i}/k$. For classification, the $k$-NN prediction of the score at ${\bf x}^*$ is given by $f_{kNN}({\bf x}^*)=\sum\nolimits_{i \in \mathcal{N}({\bf x}^*) }{f({\bf x}_i)}/k$. The {\em drift indicator} for the {\em {\sc gbmap} drifter} is the difference between the {\sc gbmap} prediction at ${\bf x}^*$ and this $k$-NN prediction, i.e., $\left|f({\bf x}^*)-f_{kNN}({\bf x}^*)\right|$. By construction, the features irrelevant to the supervision tasks will likely be ignored by {\sc gbmap} and thus have little effect on the outcome. 

As a baseline comparison, we use the Euclidean distance in the original data space from the test point ${\bf x}^*$ to the $k$:th nearest training data point ({\em {\sc euclid} drifter}). We chose $k=5$ for both drifters. We use the following parameters for {\sc gbmap} across all datasets: number of boosting steps $m=20$, softplus $\beta=5$, Ridge regularization $\lambda=10^{-3}$, maximum number of the LBFGS optimizer iterations $\text{\sc maxiter}=200$. We downsample the {\sc superconductor} dataset for this drift experiment to $n=10^4$.

Fig. \ref{fig:gbmapdrifter} depicts the scatter plots between the error and the drift indicators as well as the Receiver Operating Characteristic (ROC) curves \cite{fawcett2006introduction} of {\sc gbmap} and {\sc euclid} drifters for the {\sc cpu-small} dataset. The {\sc gbmap} drift indicator is highly correlated with the error, as desired, superior to the Euclidean indicator. Figures for the other datasets are in Sect. \ref{sec:drift}. The Area Under the Curve (AUC) values of the {\sc gbmap} and {\sc euclid} drifters when detecting concept drift for regression and classification, respectively, are shown in the right-hand columns of Tab. \ref{tab:reg}.
For the regression datasets, the {\sc gbmap} drifter is superior to the {\sc euclid} drifter, while for classification datasets, both drifters perform similarly.

We note that for some datasets, the splitting scheme described in \ref{sec:dataalgo} fails to introduce any concept drift. This can happen when there are no clearly important covariates to the given supervised task, e.g., for {\sc california} only around $12 \%$ of the data points in the out-of-distribution set \texttt{b} have loss larger than the $95 \%$ quantile threshold, compared to the in-distribution set \texttt{a2}, which has by construction $5 \%$ of points labeled as drift. Also, the datasets {\sc diabetes}, {\sc german-credit} contain only slight drift and {\sc breast-cancer} contains no drift at all (the mean loss for \texttt{b} is lower than \texttt{a2}).

\section{Other Uses for {\sc gbmap}}\label{sec:other}
\subsection{Supervision via Differences Between Models}\label{sec:f0}

The ``default'' choice is to have $f_0({\bf x})=0$, but if we want for example, to study the errors made by a pre-trained supervised learning model $h:{\mathbb R}^p\to{\mathbb R}$, we can set $f_0({\bf x})= h({\bf x})$, in which case the embedding tries to model the residuals of the model, effectively capturing the model's shortcomings.

As a real-world example, consider a simple linear regression model $h_1$ with a moderate $R^2$ score and a black-box model (e.g., a deep neural network) $h_2$ with a high $R^2$ score. The challenge is to identify (semi)explainable features that can enhance the performance of the simple linear model, bringing it closer to the black-box model. In this case, we can set $f_0=h_1$ and treat the black-box model as the target function, i.e., $y_i=h_2({\bf x}_i)$. It is worth noting that the objective here is to interpret and find new features capturing the behavior of the black-box model, not the original data. The black-box model can provide labeled data for any region of interest, allowing us to create a (semi)white-box model that approximates the black-box model's behavior in various regions, not just the training region. In another example, we can use {\sc gbmap} to find regions where these models exhibit the most significant differences. Of course, the absolute difference $\vert h_1({\bf x}) - h_2({\bf x}) \vert$ can be computed point-wise, but it does not give regions in a principled manner. We again turn to {\sc gbmap}, boosting from one model to another. By construction, {\sc gbmap} gives a sequence of hyperplanes $\{{\bf x}\in{\mathbb{R}}^p\mid{\bf w}_j^\intercal {\bf x} = 0 \}$ that separate the feature space. Notably, the half-space activated by the softplus function indicates regions where the two models diverge, particularly in the initial {\sc gbmap} iterations. If a region is never activated, it suggests that the two models are likely similar in that specific region.

\subsection{Explainability}
\label{sec:explainability}

\begin{figure}[h]
    \centering
    \includegraphics[width=0.55\textwidth]{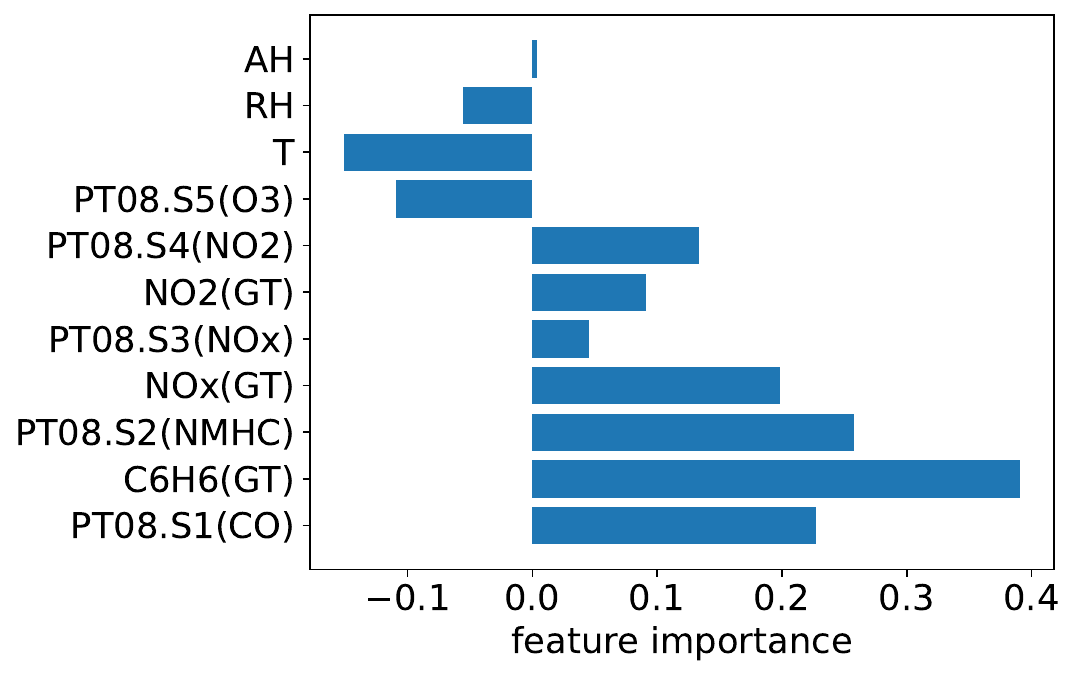}
    \caption{Local feature importance in terms of local linear regression coefficients in the decision of {\sc gbmap} for the 50th training data point in the {\sc airquality} dataset. We have omitted the intercept term from the plot.}
    \label{fig:xAI}
\end{figure}

Unlike black-box models that require explainable AI ({XAI}) tools to give insight into why these models predict specific outcomes  \cite{guidotti:2018:a,bjorklund2023slisemapdemo}, our method is interpretable with its built-in {XAI}.  
The softplus-nonlinearity is a smooth approximation of the ReLU, which separates the feature space into two parts by a hyperplane and inactivates one part while letting another part be as-is. The feature space is therefore partitioned into several regions by each weak learner, and in each area, the {\sc gbmap} prediction is roughly piece-wise linear. Consequently, we can obtain the importance of each feature in that region and have local interpretable models for each decision in terms of the linear or logistic regression coefficients. For example, Fig. \ref{fig:xAI} shows the regression coefficients of each feature in the {\sc gbmap} prediction at the $50$th training data point of the {\sc airquality} dataset. We can, from this figure, infer that \texttt{AH} plays no role in the prediction of {\sc gbmap} at this point, while \texttt{C6H6(GT)} is the most essential feature for this prediction. Recall that all features have been normalized to unit variance; hence, the regression coefficients can be interpreted directly as feature importance's.

\section{Conclusion}\label{sec:conclusion}

We have proposed a novel supervised dimensional reduction method based on a boosting framework with simple perceptron-like weak learners. As a side-product, we obtain a competitive and interpretable supervised learning algorithm for regression and classification. The transformations defined by the boosting structure of weak learners can be used for feature creation or supervised dimensional reduction.

For finding features for regression or classification, we are at least as good as the competition, such as {\sc ivis}. Our advantage is the speed and more interpretable features obtained by a simple perception transformation. Indeed, if we drop the interpretability requirement, it would be trivial to make a good feature just by including the target variable predicted by a powerful black box regression/classification algorithm as one of the features! 
The {\sc gbmap} transformation induces a distance that ignores irrelevant directions in the data, which can be used to improve distance-based learning algorithms such as $k$-Nearest Neighbors. We also showed that {\sc gbmap} can reliably detect data points with potentially large prediction errors, which is important in practical applications, including concept drift.

Interesting future directions include a more in-depth study of {\sc gbmap} to detect concept drift and quantify the uncertainty of regressor/classifier predictions. Another interesting avenue for future work is visualization: how the found embedding and the embedding distance could also be used to make supervised embeddings of the data for visual inspection (initial example shown in Sect. \ref{sec:vis2} and Fig. \ref{fig:vis}).

\begin{credits}
\subsubsection{\ackname} We acknowledge the funding by the University of Helsinki and the Research Council of Finland (decisions 346376 and 345704).

\subsubsection{\discintname}
The authors have no competing interests to declare 
relevant to the content of this article
\end{credits}

\bibliographystyle{splncs04}
\bibliography{bib}

\newpage
\appendix

{\bf Appendix.}
The main text is self-contained and can be read and understood without the following appendix. Here, we present proofs and derivations for completeness. We also have included some complementary experimental results that we decided to exclude from the main text for compactness and lack of space.

\section{{\sc gbmap} Reduces to OLS Linear and Logistic Regression if There Is no Non-Linearity}\label{sec:linear}

In this section, we study the case where $f_0({\bf x})=0$ and there is no non-linearity, i.e., $g(z)=z$. We show that in that case, {\sc gbmap} reduces to OLS linear regression and logistic regression, both with Ridge regularization, and that after the first boosting iteration ($j\ge 2$), the coefficients are of the order of $O(\lambda)$. Nonlinearity is, therefore, essential to have non-trivial solutions with more than one embedding coordinate. We state this formally with the following Lemma.
\begin{lemma}\label{lem:linear}
    The optimization problem for $f_1$ ($j=1$) of Eq. \eqref{eq:argmin} reduces for ordinary least squares linear regression with $b_1=1$ for the quadratic loss and standard logistic regression for the logistic loss, both with Ridge regularization, if there is no nonlinearity, i.e., $g(z)=z$. The parameters of subsequent weak learners $f_j$  for $j\ge 2$ are proportional to $a_j=O(\lambda^{j-1})$ and ${\bf w}_j=(O(\lambda^{j-1}),\ldots,O(\lambda^{j-1}))^\intercal$ and they vanish in the absence of the regularization term (i.e., if $\lambda=0$).
\end{lemma}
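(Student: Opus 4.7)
The plan starts from the observation that $b_j \in \{-1,+1\}$ enters $f_j$ only via $b_j \mathbf{w}_j^\intercal \mathbf{x}$ while the regularizer depends only on $\|\mathbf{w}_j\|^2$, so I would reparametrize by $\mathbf{u}_j := b_j \mathbf{w}_j$; both signs then yield the same minimum value and the restriction $b_1=1$ in the statement is merely a convention. Substituting $g(z)=z$, $f_0=0$ and this reparametrization into equation \eqref{eq:argmin} for $j=1$ collapses the objective to
\[
\min_{a_1, \mathbf{u}_1} \frac{1}{n}\sum_{i=1}^n l(y_i, a_1 + \mathbf{u}_1^\intercal \mathbf{x}_i) + \lambda \|\mathbf{u}_1\|^2/p ,
\]
which is exactly the Ridge-regularized OLS problem for $l=l_{quadratic}$ and the Ridge-regularized logistic regression problem for $l=l_{logistic}$, with $a_1$ acting as the unregularized intercept. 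That settles the first half of the lemma.

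For the second half I would induct on $j$ with the joint hypothesis $(a_k^*, \mathbf{u}_k^*)=O(\lambda^{k-1})$ for all $2 \le k \le j-1$. Writing $\ell'_i := \partial_{y'} l(y_i, F_{j-1}(\mathbf{x}_i))$, the first-order conditions of iteration $j-1$ give $\tfrac{1}{n}\sum_i \ell'_i = 0$ (from the unregularized $a_{j-1}$) and $\tfrac{1}{n}\sum_i \ell'_i \mathbf{x}_i = -\tfrac{2\lambda}{p}\mathbf{u}_{j-1}^*$, which together with the induction hypothesis (or, for the base case $j=2$, the fact that $\mathbf{u}_1^*=O(1)$) forces the right-hand side to be of order $O(\lambda^{j-1})$. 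The iteration-$j$ objective $\mathcal{L}_j+\mathcal{R}_j$ is convex in $(a_j, \mathbf{u}_j)$, and its gradient at the origin is exactly $(\tfrac{1}{n}\sum_i \ell'_i,\ \tfrac{1}{n}\sum_i \ell'_i \mathbf{x}_i)$, hence of norm $O(\lambda^{j-1})$. Its Hessian at the origin equals the empirical loss Hessian (PSD for both $l_{quadratic}$ and $l_{logistic}$) plus $\mathrm{diag}(0, \tfrac{2\lambda}{p} \mathbf{I}_p)$, which for $\lambda > 0$ is uniformly positive definite on the non-redundant subspace. The implicit function theorem applied to the FOC system, viewed as a smooth function of $\lambda$, then yields $(a_j^*, \mathbf{u}_j^*)=O(\lambda^{j-1})$, closing the induction.

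The $\lambda = 0$ statement will follow as an immediate corollary: the driving gradient vanishes exactly at every iteration $j \ge 2$, so $(0,0)$ is a critical point of the convex iteration-$j$ objective, hence a global minimizer (unique under the mild non-degeneracy assumptions that already make OLS/logistic regression well-defined). The main technical obstacle I expect is verifying that the loss Hessian remains uniformly invertible along the iteration chain for the logistic loss: its entries depend through $\sigma'(-y_i F_{j-1}(\mathbf{x}_i))$ on the previous iterates, which themselves depend on $\lambda$. Since $F_{j-1}$ is a bounded perturbation of the iteration-$1$ fit for small $\lambda$, these weights stay bounded away from zero and the argument will go through. A minor secondary subtlety arises if $\mathbf{x}$ already carries a constant intercept coordinate, creating a one-dimensional linear dependence between $a_j$ and the corresponding entry of $\mathbf{u}_j^\intercal \mathbf{x}_i$; this degeneracy is harmless and I would handle it by quotienting out the null direction before invoking the implicit function theorem.
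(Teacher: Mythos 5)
Your proposal is correct, and for the first half it coincides with the paper's proof: the paper likewise absorbs $b_1$ into $\mathbf{w}_1$ via the substitution $\mathbf{w}_1\leftarrow b_1\mathbf{w}_1$ and identifies the resulting $j=1$ problem with Ridge-regularized OLS (resp.\ logistic regression). Where you genuinely diverge is the second half. The paper disposes of the $j\ge 2$ claim in two sentences by appealing to the unregularized estimator being ``unbiased'' and the Ridge estimator being ``biased''; this is a loose statistical gloss on what is really a deterministic optimality statement, and it does not by itself yield the exponent $\lambda^{j-1}$. Your argument supplies the correct mechanism: the stage-$(j-1)$ first-order conditions give $\tfrac{1}{n}\sum_i \ell_i'=0$ exactly (unpenalized intercept) and $\tfrac{1}{n}\sum_i \ell_i'\mathbf{x}_i=-\tfrac{2\lambda}{p}\mathbf{u}_{j-1}^*$, so the stage-$j$ objective has gradient of norm $O(\lambda\,\lVert\mathbf{u}_{j-1}^*\rVert)$ at the origin, and strong convexity (uniform in $\lambda$ on the non-redundant subspace) converts this into $\lVert(a_j^*,\mathbf{u}_j^*)\rVert=O(\lambda^{j-1})$ by induction; the $\lambda=0$ case then makes the origin an exact global minimizer of a convex problem. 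This buys a rigorous derivation of the rate the paper only asserts, at the cost of the regularity bookkeeping you correctly flag: positive definiteness of the empirical loss Hessian modulo the $a_j$-versus-intercept-coordinate redundancy, boundedness of the logistic weights along the iteration chain, and existence of the unregularized logistic optimum (non-separable data). None of these caveats is a gap in substance --- they are implicit in the paper's claim as well --- so your version is, if anything, the more defensible proof.
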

\begin{proof}

\noindent {\bf Regression.}
For the regression problem and quadratic loss $l_{quadratic}(y,y')=(y-y')^2$, the optimization problem of Eq. \eqref{eq:argmin} reduces to
\begin{equation}\label{eq:argmin_ols}
a_1,b_1,{\bf w}_1
=\argmin\nolimits_{a_1,b_1,{\bf w}_1}{\left(
n^{-1}\sum\nolimits_{i=1}^n{\left(y_i-a_1-b_1{\bf w}_1^\intercal{\bf x}_i\right)^2}+
\lambda\lVert{\bf w}_1\rVert_2^2\right)}\\
\end{equation}
We can, without loss of generality, take $b_1=1$, because changing  ${\bf w}_1\leftarrow b_1{\bf w}_1$ and $b_1\leftarrow 1$ leaves the loss unchanged.

Eq. \eqref{eq:argmin_ols} defines OLS linear regression with Ridge regularization. In the absence of regularization ($\lambda=0$), OLS linear regression gives the unbiased estimator, for which reason $a_j=0$ and ${\bf w}_j=(0,\ldots,0)^\intercal$ for $j\ge 2$. However, in the presence of regularization, we have $a_j=O(\lambda)$ and ${\bf w}_j=(O(\lambda),\ldots,O(\lambda))^\intercal$ for $j\ge 2$ due to the fact that Ridge regression provides a biased estimator. The subsequent terms ($j>2$) are proportional to $O(\lambda^{j-1})$, respectively.

\noindent {\bf Classification.}
The ``standard'' logistic regression is a generalized linear model using the logit link function and maximum likelihood loss for the binomial distribution.

The probability that $y=+1$ given ${\bf x}$ is then given by $p(y=+1\mid{\bf x})=\sigma({\bf w}^\intercal{\bf x})$ and a probability of $y=-1$ by $p(y=-1\mid{\bf x})=1-\sigma({\bf w}^\intercal{\bf x})$, where $\sigma(z)=1/(1+e^{-z})$ is the sigmoid function or the inverse of the logit link function. The log-loss for observation $({\bf x},y)$ is then given by
\begin{equation}
\begin{split}
L&
=-I(y=+1)\log{p(y=+1\mid{\bf x})} -I(y=-1)\log{p(y=-1\mid{\bf x})}\\
&=-I(y=+1)\log{\left(\sigma({\bf w}^\intercal{\bf x})\right)}-I(y=-1)\log{\left(1-\sigma({\bf w}^\intercal{\bf x})\right)}\\
&=-I(y=+1)\log{\left(\sigma({\bf w}^\intercal{\bf x})\right)}-I(y=-1)\log{\left(\sigma(-{\bf w}^\intercal{\bf x})\right)}\\
&=-I(y=+1)\log{\left(\sigma(y{\bf w}^\intercal{\bf x})\right)}-I(y=-1)\log{\left(\sigma(y{\bf w}^\intercal{\bf x})\right)}\\
&=-\left(I(y=+1)+I(y=-1)\right)\log{\left(\sigma(y{\bf w}^\intercal{\bf x})\right)}\\
&=-\log{\left(\sigma(y{\bf w}^\intercal{\bf x})\right)}\\
&=\log{\left(1+e^{-y{\bf w}^\intercal{\bf x}}\right)}=l_{logistic}\left(y,{\bf w}^\intercal{\bf x}\right),
\end{split}
\end{equation}
where we have used $\sigma(z)=1-\sigma(-z)$ and $l_{logistic}(y,y')=\log{\left(1+e^{-yy'}\right)}$.

For the classification problem, the optimization problem of Eq. \eqref{eq:argmin} reduces to
\begin{equation}\label{eq:argmin_logistic}
a_1,b_1,{\bf w}_1
=\argmin\nolimits_{a_1,b_1,{\bf w}_1}{\left(
n^{-1}\sum\nolimits_{i=1}^n{l_{logistic}\left(y_i,a_1+b_1{\bf w}_1^\intercal{\bf x}_i\right)}+
\lambda\lVert{\bf w}_1\rVert_2^2\right)}.
\end{equation}
We can, without loss of generality, take $b_1=1$, because changing  ${\bf w}_1\leftarrow b_1{\bf w}_1$ and $b_1\leftarrow 1$ leaves the loss unchanged.

Eq. \eqref{eq:argmin_logistic} defines the standard logistic regression with logit link function and Binomial loss with Ridge regularization. In the absence of regularization ($\lambda=0$), the logistic regression gives the unbiased estimator, for which reason $a_j=0$ and ${\bf w}_j=(0,\ldots,0)^\intercal$ for $j\ge 2$. However, in the presence of regularization, we have $a_j=O(\lambda)$ and ${\bf w}_j=(O(\lambda),\ldots,O(\lambda))^\intercal$ for $j\ge 2$ because Ridge regression provides a biased estimator.
\end{proof}

As described in the proof above, in classification tasks with logistic loss, the predicted probability of $+1$ is given by $\hat p(y=+1\mid{\bf x})=\sigma(f({\bf x}))$, and the value of the response given the probability of one is given by as $f({\bf x})=\sigma^{-1}\left(\hat p(y=+1\mid{\bf x})\right)$, where the sigmoid function is given by $\sigma(z)=1/(1+e^{-z})$ and the logit function by $\sigma^{-1}(p)=\log{\left(p/(1-p)\right)}$. We can, therefore, use the logit function to transform outputs of a probabilistic classifier, outputting class probabilities, to the {\sc gbmap} response space $y$, if necessary, and the sigmoid function to obtain class probabilities from the response space.

\newpage

\section{Properties of the Distance Measures}\label{sec:distance_proof}

We define in Eq. \eqref{eq:demb} the {\em embedding distance} and in Eq. \eqref{eq:dpath} {\em path distance}. Here, we show the intuition behind the distances and prove the inequality of Eq. \eqref{eq:bounds}.

The ``default'' distance between points ${\bf x}\in{\mathbb{R}}^p$ and ${\bf x}'\in{\mathbb{R}}^p$ is given by the Euclidean distance $\lVert {\bf x}-{\bf x}'\rVert_2$ or the Manhattan distance $\lVert {\bf x}-{\bf x}'\rVert_1$. However, these distances have the undesirable property that they weigh all directions equally, including those irrelevant to the supervised learning task. For this reason, we have defined a path distance in Eq. \eqref{eq:dpath}. The path distance between ${\bf x}$ and ${\bf x}'$ is the total absolute change of the function $f$ when traversing a straight line $t{\bf x}'+(1-t){\bf x}$ from ${\bf x}$ to ${\bf x}'$, parameterized by $t\in[0,1]$. Next, we show some useful properties of the integral.
\begin{lemma}\label{lem:increasing}
If $f(t{\bf x}'+(1-t){\bf x})$ is non-decreasing or non-increasing function in $t\in[a,b]$ for some $a<b$ then 
\begin{equation}
    \int\nolimits_a^b{\left|\frac{\partial f(t{\bf x}'+(1-t){\bf x})}{\partial t} \right|dt}=\left|f({\bf x}_a)-f({\bf x}_b)\right|,
\end{equation}
where ${\bf x}_a=a{\bf x}'+(1-a){\bf x}$ and ${\bf x}_b=b{\bf x}'+(1-b){\bf x}$.
\end{lemma}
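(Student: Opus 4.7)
The plan is to reduce the claim to the one-dimensional fundamental theorem of calculus by introducing the scalar function $\phi:[a,b]\to\mathbb{R}$ defined by $\phi(t)=f(t{\bf x}'+(1-t){\bf x})$. With this notation, the integrand on the left-hand side is exactly $|\phi'(t)|$ (by the chain rule applied to a linear path, so differentiability of $\phi$ follows from differentiability of $f$ along the segment), and the endpoint values satisfy $\phi(a)=f({\bf x}_a)$ and $\phi(b)=f({\bf x}_b)$, so the right-hand side is $|\phi(b)-\phi(a)|$. The hypothesis that $f(t{\bf x}'+(1-t){\bf x})$ is non-decreasing or non-increasing in $t$ translates directly to the corresponding monotonicity of $\phi$.

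Next I would split into the two monotonicity cases. In the non-decreasing case, $\phi'(t)\ge 0$ wherever the derivative exists, so $|\phi'(t)|=\phi'(t)$; the fundamental theorem of calculus then gives $\int_a^b\phi'(t)\,dt=\phi(b)-\phi(a)$, which is non-negative and therefore equals $|\phi(b)-\phi(a)|=|f({\bf x}_a)-f({\bf x}_b)|$. The non-increasing case is symmetric: $|\phi'(t)|=-\phi'(t)$, so the integral evaluates to $\phi(a)-\phi(b)\ge 0$, again equal to $|f({\bf x}_a)-f({\bf x}_b)|$. In both cases the conclusion of the lemma follows.

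The only technical point requiring care is the regularity needed to apply FTC, i.e., that $\phi$ is absolutely continuous so that $\phi(b)-\phi(a)=\int_a^b\phi'(t)\,dt$. For the {\sc gbmap} model of Eq. \eqref{eq:f} this is immediate because each $f_j$ is a softplus composed with a linear projection, hence $f$ and $\phi$ are $C^\infty$; more generally the argument works whenever $f$ is locally Lipschitz on the segment so that $\phi$ is absolutely continuous. I do not expect any substantive obstacle here beyond stating this regularity assumption explicitly so that the FTC step is justified in the generality in which the lemma is intended to be used.
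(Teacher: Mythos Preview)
Your proposal is correct and follows essentially the same approach as the paper: both arguments use monotonicity to drop the absolute value from the derivative, apply the fundamental theorem of calculus to evaluate the integral as the difference of endpoint values, and then reinstate the absolute value using the sign of that difference. Your version is slightly more careful in that you explicitly name the scalar function $\phi$ and flag the regularity needed for the FTC step, which the paper leaves implicit.
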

\begin{proof}
If $f(t{\bf x}'+(1-t){\bf x})$ is non-decreasing, its derivative is non-negative, we can drop $\left|\Box\right|$, and the integral is simply
\begin{equation}\label{eq:dpath_pos}
\begin{split}
\int\nolimits_a^b{\left|\frac{\partial f(t{\bf x}'+(1-t){\bf x})}{\partial t} \right|dt}&=
\int\nolimits_a^b{\frac{\partial f(t{\bf x}'+(1-t){\bf x})}{\partial t} dt}\\&=f({\bf x}_b)-f({\bf x}_a)\\&=\left|f({\bf x}_a)-f({\bf x}_b)\right|,
\end{split}
\end{equation}
where the second equality follows from the definition of the integral and the last from the fact that the function is non-decreasing, i.e., $f({\bf x}_b)\ge f({\bf x}_a)$.
We get the same expression if $f(t{\bf x}'+(1-t){\bf x})$ is non-increasing function in $t\in[a,b]$, i.e., the gradient is negative and $f({\bf x}_a)\ge f({\bf x}_b)$:
\begin{equation}\label{eq:dpath_neg}
\begin{split}
\int\nolimits_a^b{\left|\frac{\partial f(t{\bf x}'+(1-t){\bf x})}{\partial t} \right|dt}&=-
\int\nolimits_a^b{\frac{\partial f(t{\bf x}'+(1-t){\bf x})}{\partial t} dt}\\&=f({\bf x}_a)-f({\bf x}_b)\\&=\left|f({\bf x}_a)-f({\bf x}_b)\right|.
\end{split}
\end{equation}
\end{proof}

We have the following lemma for functions that are neither decreasing nor increasing.
\begin{lemma}\label{lem:dpath_sum}
Assume we can still split the axis $t\in[0,1]$ into $k$ segments, where $t_0=0\le t_1\le\ldots\le t_{k-1}\le t_k=1$ such that $f(t{\bf x}'+(1-t){\bf x})$ is either non-decreasing or non-increasing function in each of the $k$ intervals $t\in[t_{l-1},t_l]$ for all $l\in[k]$. The path distance can be expressed as a sum of absolute changes of the residual function:
\begin{equation}\label{eq:dpath_sum}
d_{path}({\bf x},{\bf x}')=
\sum\nolimits_{l=1}^k{\left|f(t_{l-1}{\bf x}'+(1-t_{l-1}){\bf x})-
f(t_l{\bf x}'+(1-t_l){\bf x})\right|}.
\end{equation}
\end{lemma}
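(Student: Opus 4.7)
The plan is to prove Lemma \ref{lem:dpath_sum} by decomposing the integral defining $d_{path}$ over the partition $0 = t_0 \le t_1 \le \cdots \le t_k = 1$ and applying Lemma \ref{lem:increasing} on each subinterval. Concretely, I would start from the definition in Eq. \eqref{eq:dpath} and use additivity of the Lebesgue (or Riemann) integral to write
\begin{equation*}
d_{path}({\bf x},{\bf x}') = \int_0^1 \left|\frac{\partial f(t{\bf x}'+(1-t){\bf x})}{\partial t}\right| dt = \sum_{l=1}^k \int_{t_{l-1}}^{t_l}\left|\frac{\partial f(t{\bf x}'+(1-t){\bf x})}{\partial t}\right| dt.
\end{equation*}

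Next, I would invoke the hypothesis that $f(t{\bf x}'+(1-t){\bf x})$ is monotonic (either non-decreasing or non-increasing) on each $[t_{l-1}, t_l]$. Applying Lemma \ref{lem:increasing} with $a = t_{l-1}$ and $b = t_l$ immediately gives
\begin{equation*}
\int_{t_{l-1}}^{t_l}\left|\frac{\partial f(t{\bf x}'+(1-t){\bf x})}{\partial t}\right| dt = \left|f(t_{l-1}{\bf x}'+(1-t_{l-1}){\bf x}) - f(t_l{\bf x}'+(1-t_l){\bf x})\right|
\end{equation*}
for every $l \in [k]$. Substituting back and summing yields Eq. \eqref{eq:dpath_sum}, which is the claim.

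There is not really a significant obstacle here; the lemma is essentially a corollary of Lemma \ref{lem:increasing} together with the additivity of the integral over a partition. The only subtle points worth mentioning explicitly are (i) that the absolute value inside the integrand behaves well under the split because the sign of the derivative is constant on each subinterval by the monotonicity hypothesis (so the reduction of Lemma \ref{lem:increasing} is legitimate), and (ii) that the endpoint values of adjacent segments agree, so no telescoping cancellation is intended and each absolute difference is kept separately in the sum. Beyond these remarks, the argument is a direct piecewise application of the previous lemma, and no further machinery is needed.
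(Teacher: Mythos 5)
Your proof is correct and is essentially the same argument the paper intends: the paper's proof simply states that the result ``follows directly from Lemma \ref{lem:increasing}'', and your decomposition of the integral over the partition followed by a segment-wise application of that lemma is exactly the implicit reasoning being invoked. No gaps; your explicit treatment of the integral additivity and the per-segment monotonicity is a faithful (and slightly more careful) elaboration of the paper's one-line proof.
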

\begin{proof}
The proof follows directly from Lemma \ref{lem:increasing}.
\end{proof}

The Lemma below proves Eq. \eqref{eq:bounds}.
\begin{lemma}\label{lem:bounds}
The path and embedding distances of Eqs. \eqref{eq:dpath} and \eqref{eq:demb} satisfy the bounds of Eq. \eqref{eq:bounds}, i.e.,$$
\left|f({\bf x}')-f({\bf x})\right|\leq
d_{path}({\bf x},{\bf x}')\leq
d_{emb}({\bf x},{\bf x}').
$$
\end{lemma}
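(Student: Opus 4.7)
The plan is to prove the two inequalities separately, with the lower bound following from the fundamental theorem of calculus and the upper bound from a triangle inequality under the integral combined with Lemma \ref{lem:increasing}.

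For the lower bound, I would start by applying the fundamental theorem of calculus to the composite function $t \mapsto f(t{\bf x}' + (1-t){\bf x})$, which gives
$f({\bf x}') - f({\bf x}) = \int_0^1 \partial_t f(t{\bf x}' + (1-t){\bf x})\, dt$.
Taking absolute values and pulling the absolute value inside the integral via the standard inequality $\left|\int g\right| \le \int |g|$ yields $|f({\bf x}') - f({\bf x})| \le d_{path}({\bf x}, {\bf x}')$ immediately, matching the definition in Eq. \eqref{eq:dpath}.

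For the upper bound, the plan is to exchange the absolute value with the outer sum defining $f$. Using $f({\bf x}) = \sum_{j=0}^m f_j({\bf x})$ (with the default $f_0 \equiv 0$, so only $j = 1, \ldots, m$ contribute), the pointwise triangle inequality gives
$\left|\partial_t f(t{\bf x}' + (1-t){\bf x})\right| \le \sum_{j=1}^m \left|\partial_t f_j(t{\bf x}' + (1-t){\bf x})\right|$.
Integrating over $t \in [0,1]$ and interchanging the finite sum with the integral bounds $d_{path}({\bf x}, {\bf x}')$ by $\sum_{j=1}^m \int_0^1 |\partial_t f_j(t{\bf x}' + (1-t){\bf x})|\, dt$. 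The next step is to show each of these summands collapses to $|f_j({\bf x}') - f_j({\bf x})|$, so that the bound becomes exactly $d_{emb}({\bf x}, {\bf x}')$ as defined in Eq. \eqref{eq:demb}.

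To invoke Lemma \ref{lem:increasing} with $a = 0$, $b = 1$ on each summand, I must verify that $t \mapsto f_j(t{\bf x}' + (1-t){\bf x})$ is monotone on $[0,1]$. Here I would use the structural form $f_j({\bf x}) = a_j + b_j\, g({\bf w}_j^\intercal {\bf x})$ from Eq. \eqref{eq:fj}: the inner map $t \mapsto {\bf w}_j^\intercal {\bf x} + t\, {\bf w}_j^\intercal ({\bf x}' - {\bf x})$ is affine in $t$, softplus $g$ is globally non-decreasing, and $b_j \in \{-1, +1\}$ merely flips the sign. Hence the composite is monotone on all of $[0,1]$ (non-decreasing if $b_j\, {\bf w}_j^\intercal({\bf x}' - {\bf x}) \ge 0$, non-increasing otherwise), and Lemma \ref{lem:increasing} applies. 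Summing over $j \in [m]$ delivers $d_{path}({\bf x}, {\bf x}') \le d_{emb}({\bf x}, {\bf x}')$.

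The main obstacle is nothing deep, but a careful identification of the monotonicity structure that makes Lemma \ref{lem:increasing} applicable to every single weak learner simultaneously, despite the fact that $f$ itself need not be monotone along the segment; the crucial point is that each $f_j$ individually is, precisely because softplus composed with an affine function is monotone. The assumption $f_0 \equiv 0$ is used tacitly so that the embedding (which omits the index $j=0$) captures all contributions to $f$; if a nontrivial $f_0$ is used as in Section \ref{sec:f0}, an analogous statement would require either $f_0$ to be monotone along segments or to be absorbed into the embedding distance as an extra term.
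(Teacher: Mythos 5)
Your proof is correct, and the upper bound is argued exactly as in the paper: pull the absolute value inside the sum under the integral, then collapse each summand to $\left|f_j({\bf x}')-f_j({\bf x})\right|$ via Lemma \ref{lem:increasing}, using that each $f_j$ restricted to the segment is softplus composed with an affine map (times $b_j=\pm 1$) and hence monotone. Where you diverge is the lower bound: the paper routes it through Lemma \ref{lem:dpath_sum}, i.e., it partitions $[0,1]$ into finitely many monotone pieces of $t\mapsto f(t{\bf x}'+(1-t){\bf x})$, writes $d_{path}$ as a telescoping sum of absolute increments, and applies the triangle inequality; you instead apply the fundamental theorem of calculus to the composite map and use $\left|\int g\right|\le\int\left|g\right|$ directly. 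Your route is slightly cleaner because it does not require justifying that a finite monotone partition exists (which for the paper's $f$ holds since the derivative along the segment is real-analytic, but is never verified there). Two of your side remarks also add value over the paper's writeup: you make explicit the monotonicity check for each weak learner that the paper only asserts, and you correctly flag that the upper bound tacitly needs $f_0$ to be constant, since the embedding distance of Eq. \eqref{eq:demb} omits the $j=0$ term while $f$ in Eq. \eqref{eq:f} includes it; the paper's chain of equalities silently drops $f_0$ when passing from $\partial_t f$ to $\sum_{j=1}^m \partial_t f_j$.
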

\begin{proof}
The sum of Eq. \eqref{eq:dpath_sum} is lower-bounded by $\left|f({\bf x}')-f({\bf x})\right|$, because of the triangle inequality $|a-c|\le|a-b|+|b-c|$ for any $a,b,c\in{\mathbb{R}}$. According to Lemma \ref{lem:increasing}, the lower bound is tight when $f(t{\bf x}'+(1-t){\bf x})$ is increasing or decreasing function in $t\in[0,1]$.

We can rewrite the path distance as 
\begin{equation}
\begin{split}
d_{path}({\bf x},{\bf x}')&=
\int\nolimits_0^1{\left|\frac{\partial f(t{\bf x}'+(1-t){\bf x})}{\partial t}\right|dt}\\
&=\int\nolimits_0^1{\left|\sum\nolimits_{j=1}^m{\frac{\partial f_j(t{\bf x}'+(1-t){\bf x})}{\partial t}}\right|dt}\\
&\le\sum\nolimits_{j=1}^m{\int\nolimits_0^1{\left|\frac{\partial f_j(t{\bf x}'+(1-t){\bf x})}{\partial t}\right|dt}}\\
&=\sum\nolimits_{j=1}^m{\left|f_j({\bf x}')-f_j({\bf x})\right|}\\
&=d_{emb}({\bf x},{\bf x}'),
\end{split}
\end{equation}
where we have used the definition of the function $f$ in Eq. \eqref{eq:f} for the second equality and $\left|\sum\nolimits_j{a_j}\right|\le\sum\nolimits_j{\left|a_j\right|}$ for any $a_j\in{\mathbb{R}}$ for the inequality. The second last equality follows from the Lemma \ref{lem:increasing} and the fact that each of the $f_j(t{\bf x}'+(1-t){\bf x})$ is as a softplus function either an increasing or decreasing function in $t$, resulting in the upper bound in Eq. \eqref{eq:bounds}.
\end{proof}

\newpage

\section{Real-World Datasets}\label{sec:data}

The following datasets in Tab. \ref{tab:reg-data} are from the {\sc UCI} repository \cite{kellyuci}:
\begin{description}
\item[{\sc autompg}] The target here is to predict the fuel consumption of cars using their physical properties \cite{r.quinlan1993AutoMPG}. 
\item[{\sc abalone}] The aim here is to predict the age of the Abalone snails using their physical properties \cite{warwicknash1994abalone}. 
\item[{\sc wine-red}] Is a red wine subset of the Wine Quality dataset containing various physicochemical properties of wines \cite{paulocortez2009WineQuality}. 
\item[{\sc wine-white}] Corresponding white wine subset of Wine Quality dataset. 
\item[{\sc concrete}] The objective here is to predict the compressive strength of concrete samples based on their physical and chemical properties \cite{i-chengyeh1998concrete}. 
\item[{\sc air-quality}] Contains hourly averaged air quality measurements spanning approximately one year \cite{devito2008field}. 
\item[{\sc diabetes}] The aim here is to classify the occurrence of diabetes based on diagnostic measurements \cite{kahn00diabetes}. 
\item[{\sc german-credit}] The objective here is to classify people as having good or bad credit risk based on attributes like employment duration, credit history, and loan purpose \cite{hofmann1994statlog}. 
\item[{\sc breast-cancer}] The goal is to classify breast cancer tumors as malignant or benign based on features derived from biopsy samples \cite{matjazzwitter1988breast}. 
\end{description}

The following datasets in Tab. \ref{tab:reg-data} are from the {\sc OpenML} \cite{OpenML2013}:
\begin{description}
\item[{\sc cpu-small}] The aim here is to predict the system activity of a CPU \cite{delve}. 
\item[{\sc higgs-10k}] A subset HIGGS dataset \cite{whiteson2014higgs} used for classifying particle processes as signal or background noise.
\item[{\sc eeg-eye-state}] The target in this dataset is to determine the eye state (open or closed) based on EEG brainwave data \cite{roesler2013eeg}. 
\item[{\sc qm9-10k}] A subset of 10\ 000 heavies molecules from qm9 dataset \cite{ramakrishnan2014Quantum}. 
\item[{\sc superconductor}] Contains information about chemical properties of superconductors \cite{hamidieh2018superconductivty}. 
\end{description}

The following dataset in Tab. \ref{tab:reg-data} was loaded from {\sc scikit-learn} \cite{scikit-learn}:
\begin{description}
\item[{\sc california}] The goal here is to predict the price of houses in California, USA, using geographical characteristics and construction.
\end{description}

\clearpage
\newpage

\section{Supervised Learning Features Results}\label{sec:appendix_features}

Tab. \ref{tab:features_all} presents supplementary results to Sect. \ref{sec:emb}.

\begin{table}[h]
\caption{Feature creation experiment results. Top: $R^2$, bottom: accuracy, standard deviation on parenthesis. Higher values are better. Note that {\sc ivis} crashed on multiple occasions on the datasets with many data points, and {\sc lol} requires classification labels. Results are missing for {\sc pca} and {\sc lol} when $p < m$ as it is not possible to select more components $m$ than the data covariates $p$}
\label{tab:features_all}
\begin{center}
\begin{small}
\begin{sc}
\begin{tabular}{llllll}
\toprule
dataset & $m$ & gbmap & pca & ivis & lol \\
\midrule
california & 2 & 0.64 ($\pm$ 0.01) & 0.03 ($\pm$ 0.01) & \textbf{0.72 ($\pm$ 0.01)} & $-$ \\
california & 8 & 0.72 ($\pm$ 0.01) & 0.61 ($\pm$ 0.01) & \textbf{0.77 ($\pm$ 0.01)} & $-$ \\
california & 12 & 0.73 ($\pm$ 0.01) & $p < m$ & \textbf{0.77 ($\pm$ 0.01)} & $-$ \\
california & 16 & 0.73 ($\pm$ 0.01) & $p < m$ & \textbf{0.77 ($\pm$ 0.01)} & $-$ \\
california & 32 & 0.74 ($\pm$ 0.01) & $p < m$ & \textbf{0.78 ($\pm$ 0.01)} & $-$ \\
concrete & 2 & \textbf{0.73 ($\pm$ 0.04)} & 0.11 ($\pm$ 0.06) & crash & $-$ \\
concrete & 8 & \textbf{0.85 ($\pm$ 0.01)} & 0.62 ($\pm$ 0.03) & 0.76 ($\pm$ 0.03) & $-$ \\
concrete & 12 & \textbf{0.85 ($\pm$ 0.02)} & $p < m$ & 0.78 ($\pm$ 0.03) & $-$ \\
concrete & 16 & \textbf{0.86 ($\pm$ 0.02)} & $p < m$ & 0.78 ($\pm$ 0.04) & $-$ \\
concrete & 32 & \textbf{0.85 ($\pm$ 0.05)} & $p < m$ & 0.78 ($\pm$ 0.03) & $-$ \\
cpu-small & 2 & \textbf{0.97 ($\pm$ 0)} & 0.36 ($\pm$ 0.02) & 0.94 ($\pm$ 0.02) & $-$ \\
cpu-small & 8 & \textbf{0.97 ($\pm$ 0)} & 0.69 ($\pm$ 0.01) & \textbf{0.97 ($\pm$ 0)} & $-$ \\
cpu-small & 12 & \textbf{0.97 ($\pm$ 0)} & 0.72 ($\pm$ 0.01) & \textbf{0.97 ($\pm$ 0)} & $-$ \\
cpu-small & 16 & \textbf{0.97 ($\pm$ 0.01)} & $p < m$ & \textbf{0.97 ($\pm$ 0)} & $-$ \\
cpu-small & 32 & \textbf{0.97 ($\pm$ 0.01)} & $p < m$ & \textbf{0.97 ($\pm$ 0)} & $-$ \\
superconductor & 2 & 0.82 ($\pm$ 0) & 0.46 ($\pm$ 0.01) & \textbf{0.83 ($\pm$ 0.01)} & $-$ \\
superconductor & 8 & 0.84 ($\pm$ 0) & 0.56 ($\pm$ 0) & \textbf{0.85 ($\pm$ 0.01)} & $-$ \\
superconductor & 12 & 0.84 ($\pm$ 0.01) & 0.59 ($\pm$ 0) & \textbf{0.85 ($\pm$ 0.01)} & $-$ \\
superconductor & 16 & 0.84 ($\pm$ 0) & 0.60 ($\pm$ 0) & \textbf{0.86 ($\pm$ 0.01)} & $-$ \\
superconductor & 32 & \textbf{0.85 ($\pm$ 0)} & 0.69 ($\pm$ 0.01) & \textbf{0.85 ($\pm$ 0.01)} & $-$ \\
qm9-10k & 2 & \textbf{0.51 ($\pm$ 0.01)} & 0.07 ($\pm$ 0) & 0.49 ($\pm$ 0.09) & $-$ \\
qm9-10k & 8 & \textbf{0.61 ($\pm$ 0.02)} & 0.19 ($\pm$ 0.01) & 0.57 ($\pm$ 0.04) & $-$ \\
qm9-10k & 12 & \textbf{0.65 ($\pm$ 0.02)} & 0.27 ($\pm$ 0.01) & 0.57 ($\pm$ 0.03) & $-$ \\
qm9-10k & 16 & \textbf{0.66 ($\pm$ 0.02)} & 0.34 ($\pm$ 0.01) & 0.58 ($\pm$ 0.03) & $-$ \\
qm9-10k & 32 & \textbf{0.68 ($\pm$ 0.02)} & $p < m$ & 0.59 ($\pm$ 0.03) & $-$ \\
synth-cos-r & 2 & \textbf{0.06 ($\pm$ 0)} & 0 ($\pm$ 0) & crash & $-$ \\
synth-cos-r & 8 & \textbf{0.16 ($\pm$ 0)} & 0 ($\pm$ 0) & crash & $-$ \\
synth-cos-r & 12 & \textbf{0.20 ($\pm$ 0)} & 0 ($\pm$ 0) & crash & $-$ \\
synth-cos-r & 16 & 0.23 ($\pm$ 0) & 0 ($\pm$ 0) & \textbf{0.64 ($\pm$ 0.01)} & $-$ \\
synth-cos-r & 32 & \textbf{0.34 ($\pm$ 0)} & 0 ($\pm$ 0) & crash & $-$ \\
synth-cos-r & 64 & 0.46 ($\pm$ 0) & 0 ($\pm$ 0) & \textbf{0.66 ($\pm$ 0.01)} & $-$ \\
synth-cos-r & 128 & \textbf{0.52 ($\pm$ 0)} & 0 ($\pm$ 0) & crash & $-$ \\
\midrule
higgs-10k & 2 & \textbf{0.66 ($\pm$ 0.01)} & 0.53 ($\pm$ 0.01) & 0.61 ($\pm$ 0.02) & 0.59 ($\pm$ 0.01) \\
higgs-10k & 8 & \textbf{0.68 ($\pm$ 0.01)} & 0.54 ($\pm$ 0.01) & 0.67 ($\pm$ 0.01) & 0.61 ($\pm$ 0.01) \\
higgs-10k & 12 & \textbf{0.68 ($\pm$ 0.01)} & 0.54 ($\pm$ 0) & \textbf{0.68 ($\pm$ 0.01)} & 0.61 ($\pm$ 0.01) \\
higgs-10k & 16 & \textbf{0.68 ($\pm$ 0.01)} & 0.55 ($\pm$ 0) & \textbf{0.68 ($\pm$ 0.01)} & 0.61 ($\pm$ 0.01) \\
higgs-10k & 32 & 0.67 ($\pm$ 0.01) & $p < m$ & \textbf{0.68 ($\pm$ 0.01)} & $p < m$ \\
eeg-eye-state & 2 & \textbf{0.61 ($\pm$ 0.01)} & 0.55 ($\pm$ 0) & 0.60 ($\pm$ 0.01) & 0.57 ($\pm$ 0) \\
eeg-eye-state & 8 & 0.65 ($\pm$ 0.01) & 0.57 ($\pm$ 0.01) & \textbf{0.80 ($\pm$ 0.01)} & 0.58 ($\pm$ 0.01) \\
eeg-eye-state & 12 & 0.69 ($\pm$ 0.02) & 0.62 ($\pm$ 0.01) & \textbf{0.80 ($\pm$ 0.01)} & 0.61 ($\pm$ 0.01) \\
eeg-eye-state & 16 & 0.71 ($\pm$ 0.01) & $p < m$ & \textbf{0.81 ($\pm$ 0.01)} & $p < m$ \\
eeg-eye-state & 32 & 0.75 ($\pm$ 0.01) & $p < m$ & \textbf{0.81 ($\pm$ 0.01)} & $p < m$ \\
synth-cos-c & 2 & \textbf{0.56 ($\pm$ 0)} & 0.50 ($\pm$ 0) & crash & 0.50 ($\pm$ 0) \\
synth-cos-c & 8 & \textbf{0.59 ($\pm$ 0)} & 0.50 ($\pm$ 0) & crash & 0.50 ($\pm$ 0) \\
synth-cos-c & 12 & \textbf{0.60 ($\pm$ 0)} & 0.50 ($\pm$ 0) & crash & 0.50 ($\pm$ 0) \\
synth-cos-c & 16 & 0.61 ($\pm$ 0) & 0.50 ($\pm$ 0) & \textbf{0.67 ($\pm$ 0)} & 0.50 ($\pm$ 0) \\
synth-cos-c & 32 & \textbf{0.62 ($\pm$ 0)} & 0.50 ($\pm$ 0) & crash & 0.50 ($\pm$ 0) \\
synth-cos-c & 64 & 0.64 ($\pm$ 0) & 0.50 ($\pm$ 0) & \textbf{0.68 ($\pm$ 0)} & 0.50 ($\pm$ 0) \\
synth-cos-c & 128 & \textbf{0.64 ($\pm$ 0)} & 0.50 ($\pm$ 0) & crash & 0.50 ($\pm$ 0) \\
\end{tabular}
\end{sc}
\end{small}
\end{center}
\end{table}

\clearpage
\newpage

\section{Out-of-Distribution Detection}

\label{sec:drift}

\subsection{Regression Datasets}

Fig. \ref{fig:gbmapdrifterregall} and Fig. \ref{fig:euclideandrifterregall} show the results of the {\sc gbmap} and the {\sc euclid} drifters on all considered regression datasets. There is a clear correlation between the {\sc gbmap} drift indicator and the actual loss, explaining the high AUC values (around $0.9$ for most datasets) reported in Tab. \ref{tab:reg} (the last block of columns). Meanwhile, the {\sc euclid} indicator is inferior for most datasets.

\begin{figure}
     \centering
     \subfloat[{\sc autompg}]{
	\includegraphics[width=0.45\textwidth]{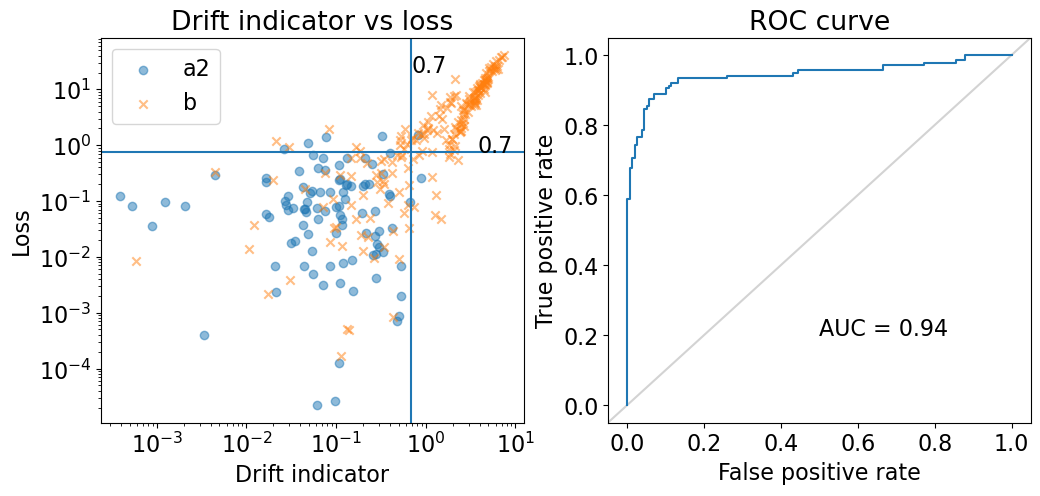} 
	}
    \subfloat[{\sc abalone}]{
	\includegraphics[width=0.45\textwidth]{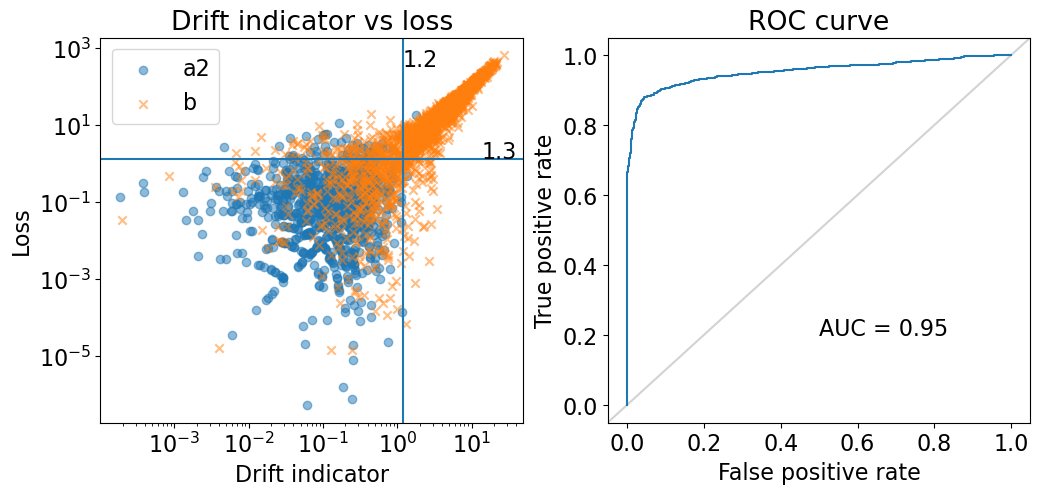} 
	}
 \newline
\noindent 
    \subfloat[{\sc california}]{
	\includegraphics[width=0.45\textwidth]{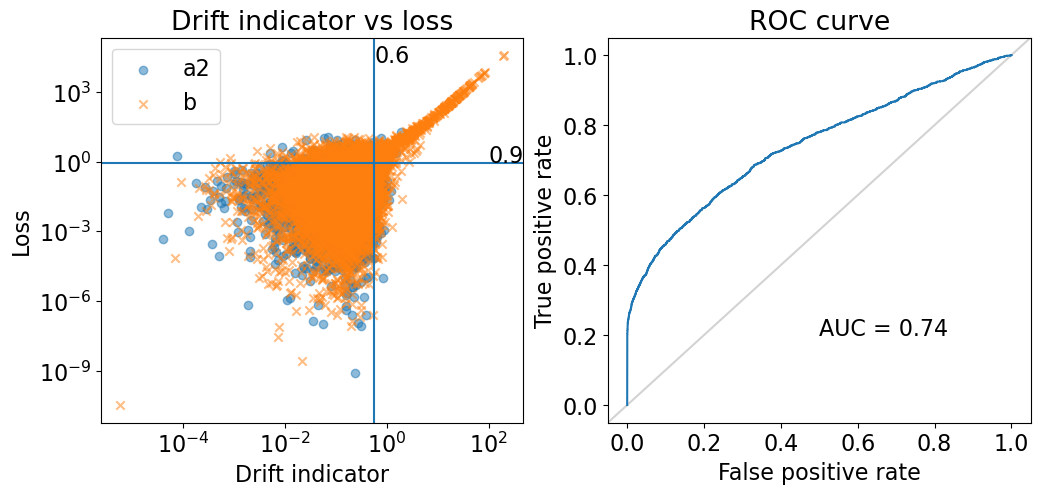} 
	}
    \subfloat[{\sc wine-red}]{
	\includegraphics[width=0.45\textwidth]{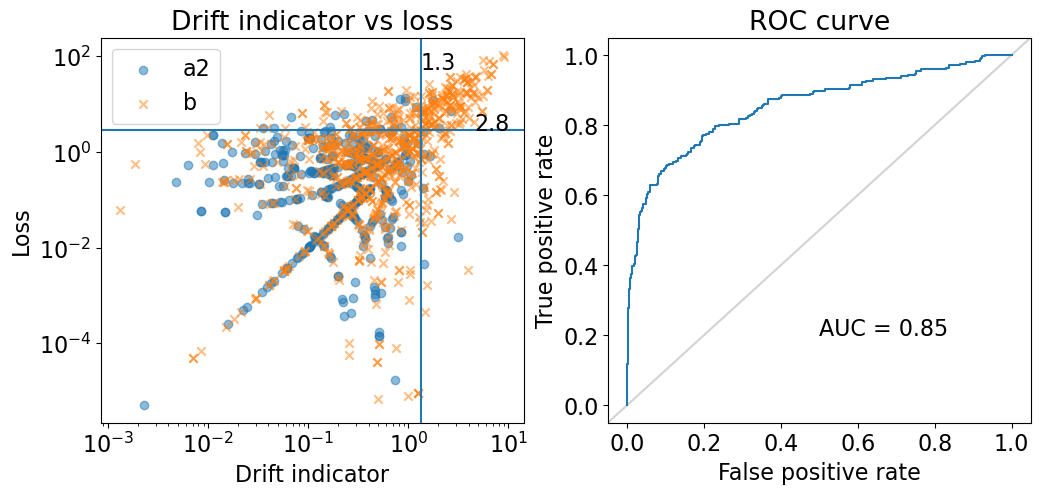} 
	}
 \newline
\noindent 
    \subfloat[{\sc wine-white}]{
	\includegraphics[width=0.45\textwidth]{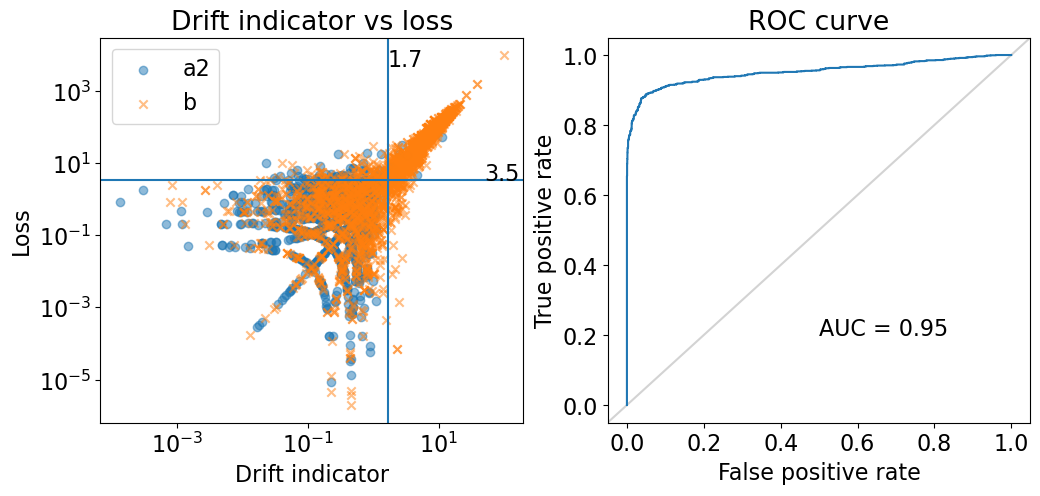} 
	}
    \subfloat[{\sc concrete}]{
	\includegraphics[width=0.45\textwidth]{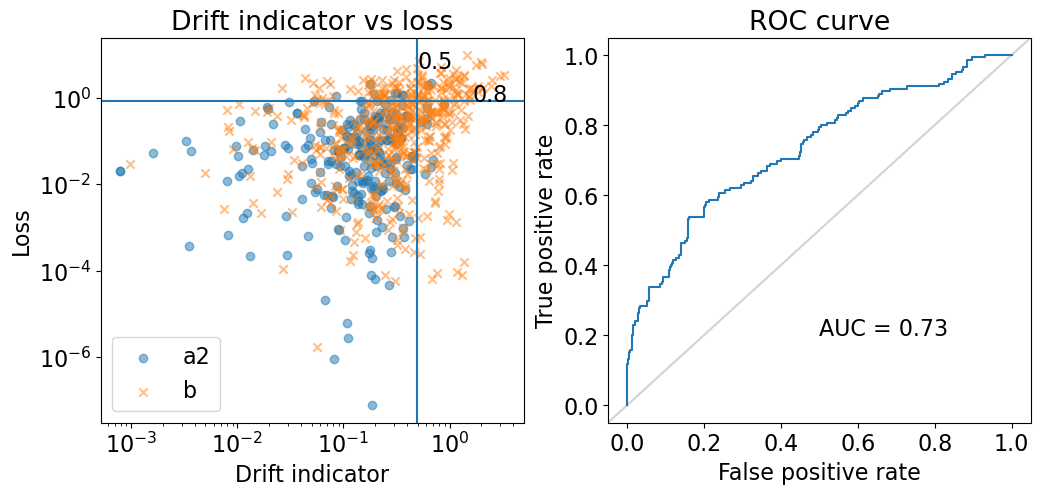} 
	}
 \newline
\noindent 
    \subfloat[{\sc cpu-small}]{
	\includegraphics[width=0.45\textwidth]{figs/drift/cpu-small_gbmap_0_1.0.png} 
	}
     \subfloat[{\sc airquality}]{
	\includegraphics[width=0.45\textwidth]{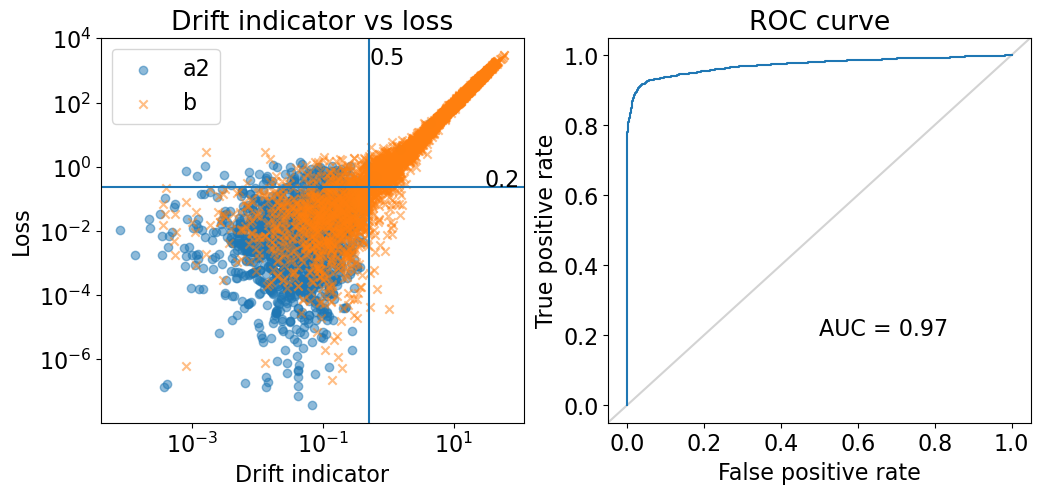} 
	}
 \newline
\noindent 
     \subfloat[{\sc qm9-10k}]{
	\includegraphics[width=0.45\textwidth]{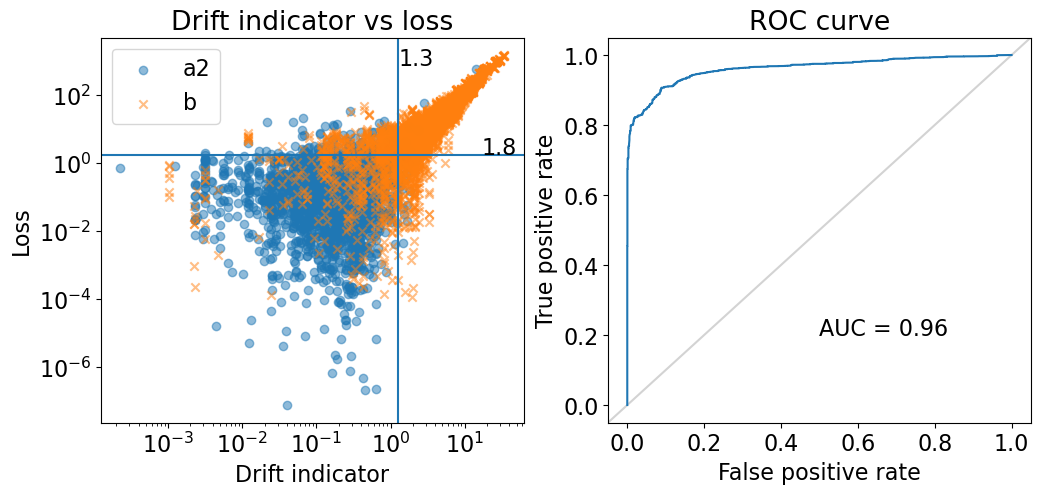} 
	}
    \subfloat[{\sc superconductor}]{
	\includegraphics[width=0.45\textwidth]{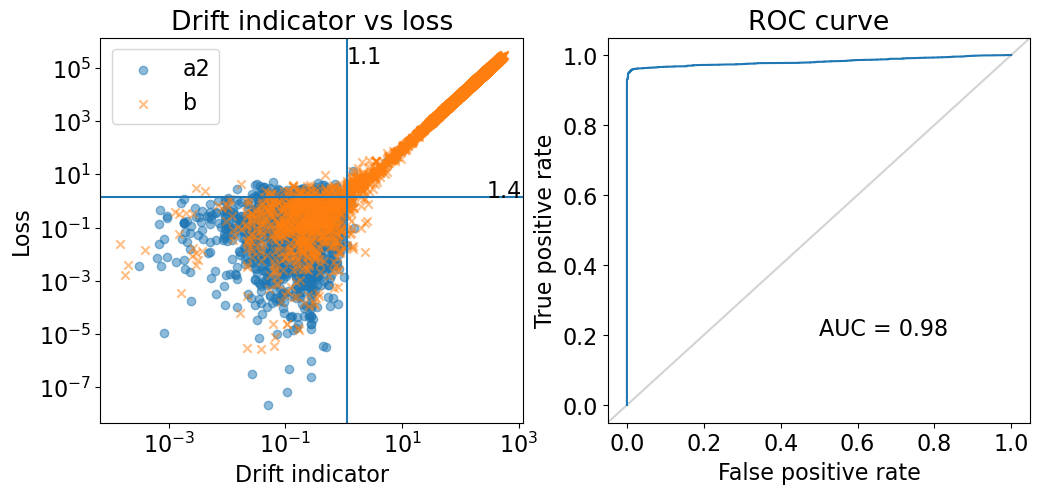} 
	}

 \caption{{\sc gbmap} drifter for regression datasets}
 \label{fig:gbmapdrifterregall}
\end{figure}

\begin{figure}
     \centering
    \subfloat[{\sc autompg}]{
	\includegraphics[width=0.45\textwidth]{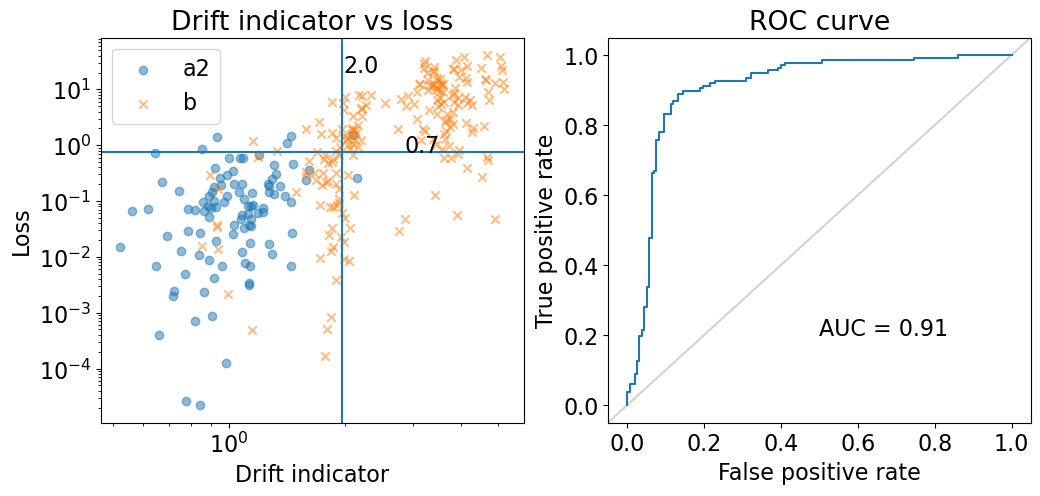} 
	}
    \subfloat[{\sc abalone}]{
	\includegraphics[width=0.45\textwidth]{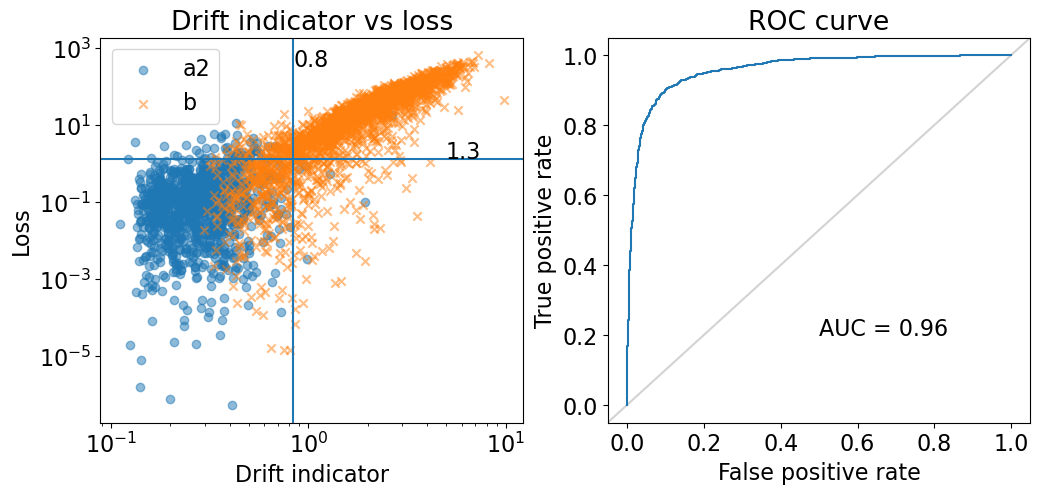} 
	}
 \newline
\noindent 
    \subfloat[{\sc california}]{
	\includegraphics[width=0.45\textwidth]{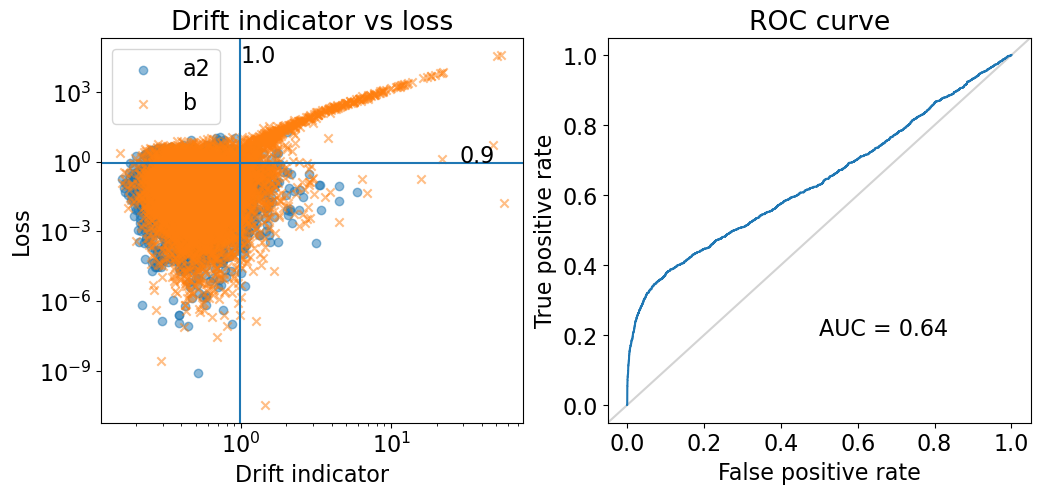} 
	}
    \subfloat[{\sc wine-red}]{
	\includegraphics[width=0.45\textwidth]{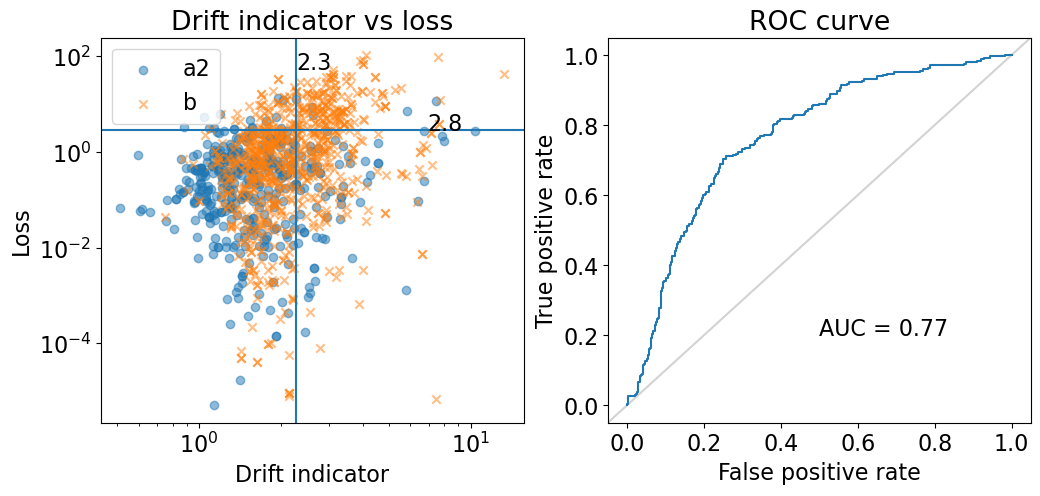} 
	}
\newline
\noindent 
    \subfloat[{\sc wine-white}]{
	\includegraphics[width=0.45\textwidth]{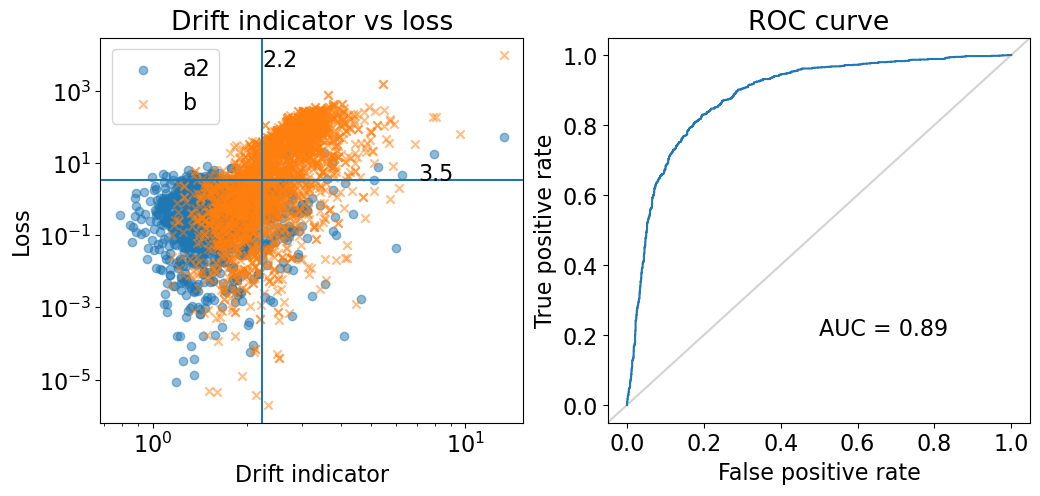} 
	}
    \subfloat[{\sc concrete}]{
	\includegraphics[width=0.45\textwidth]{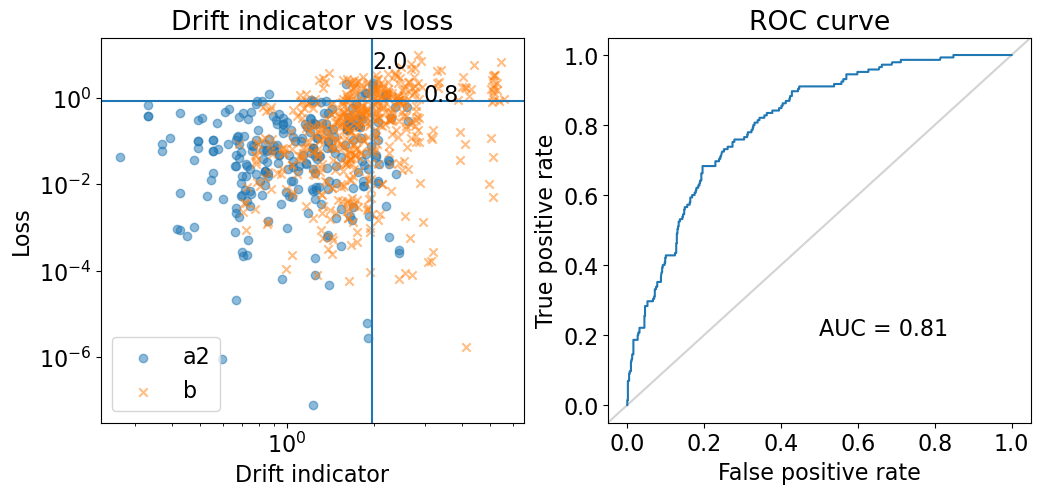} 
	}
 \newline
\noindent 
    \subfloat[{\sc cpu-small}]{
	\includegraphics[width=0.45\textwidth]{figs/drift/cpu-small_euclidean_0_1.0.png} 
	}
    \subfloat[{\sc airquality}]{
	\includegraphics[width=0.45\textwidth]{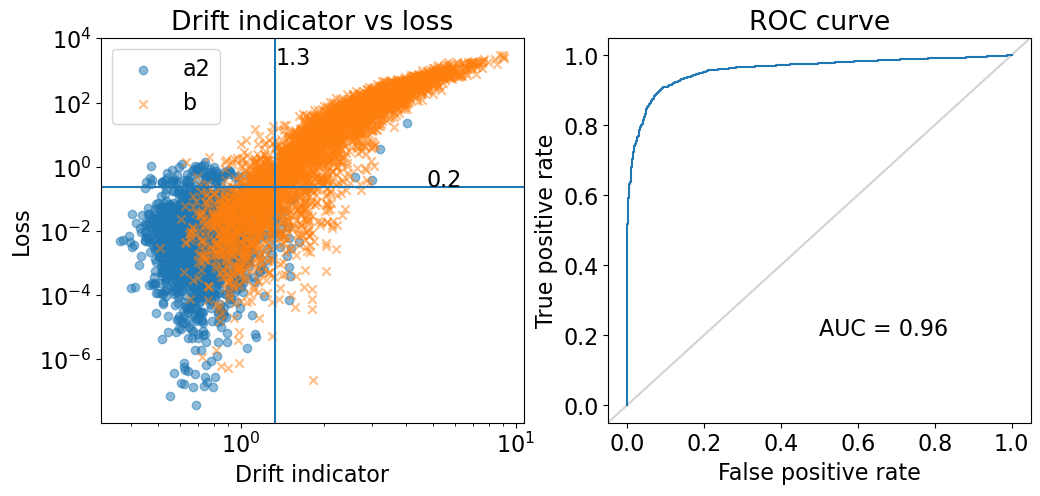} 
	}
 \newline
\noindent 
    \subfloat[{\sc qm9-10k}]{
	\includegraphics[width=0.45\textwidth]{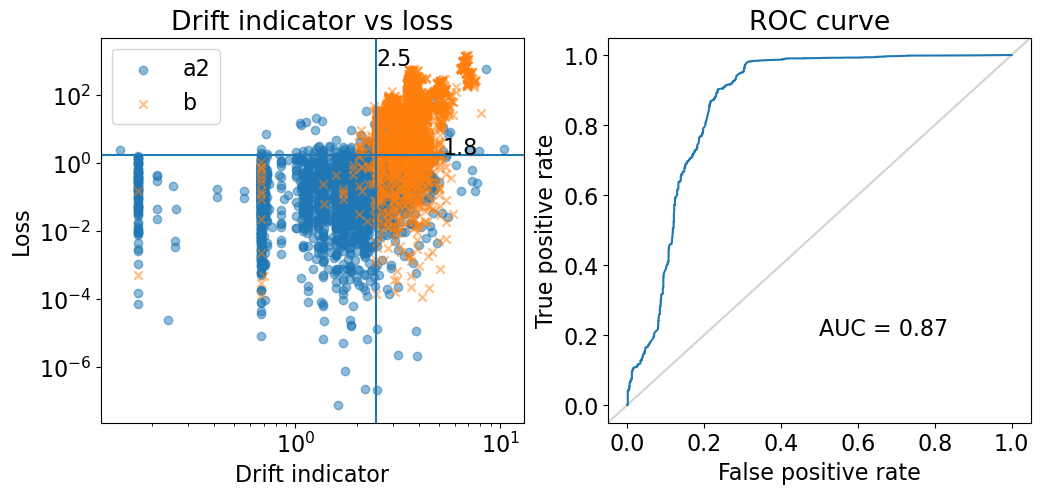} 
	}
    \subfloat[{\sc superconductor}]{
	\includegraphics[width=0.45\textwidth]{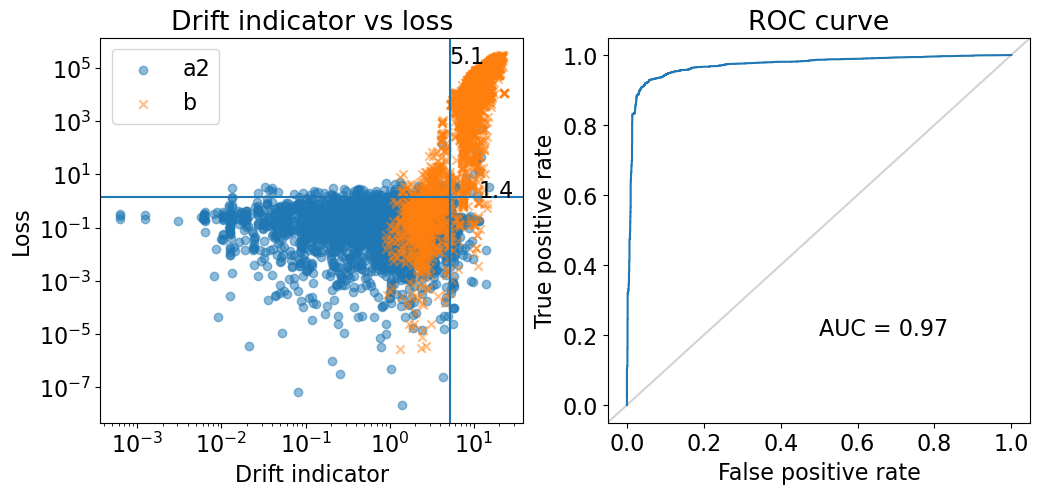} 
	}

 \caption{{\sc euclid} drifter for regression datasets}
 \label{fig:euclideandrifterregall}
\end{figure}

\clearpage
\newpage

\subsection{Classification Datasets}

Fig. \ref{fig:gbmapdrifterclsall} and Fig. \ref{fig:euclideandrifterclsall} show the results of the {\sc gbmap} drifter and the {\sc euclid} drifter on all considered classification datasets. The drifters perform similarly for the classification datasets.

\begin{figure}[h]
     \centering
     \subfloat[{\sc breast-cancer}]{
	\includegraphics[width=0.45\textwidth]{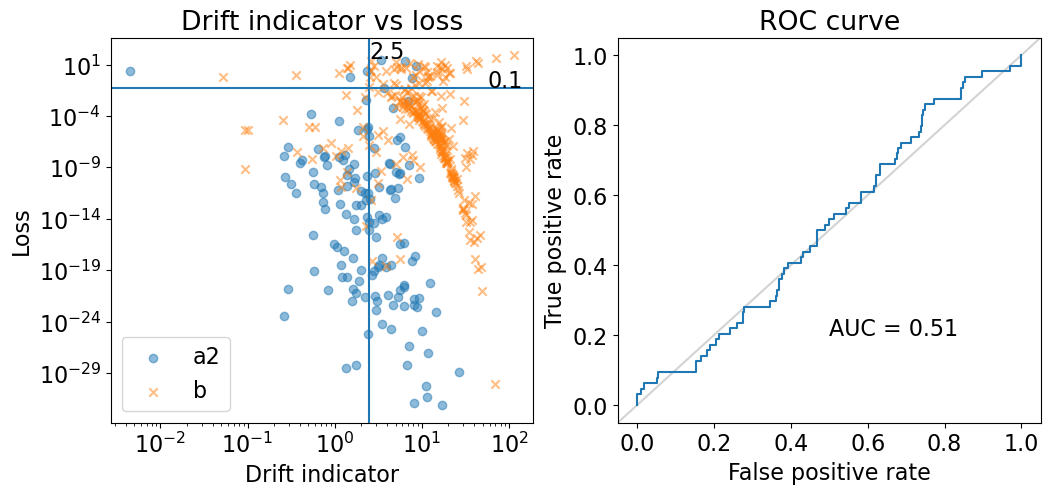} 
	}
 \subfloat[{\sc diabetes}]{
	\includegraphics[width=0.45\textwidth]{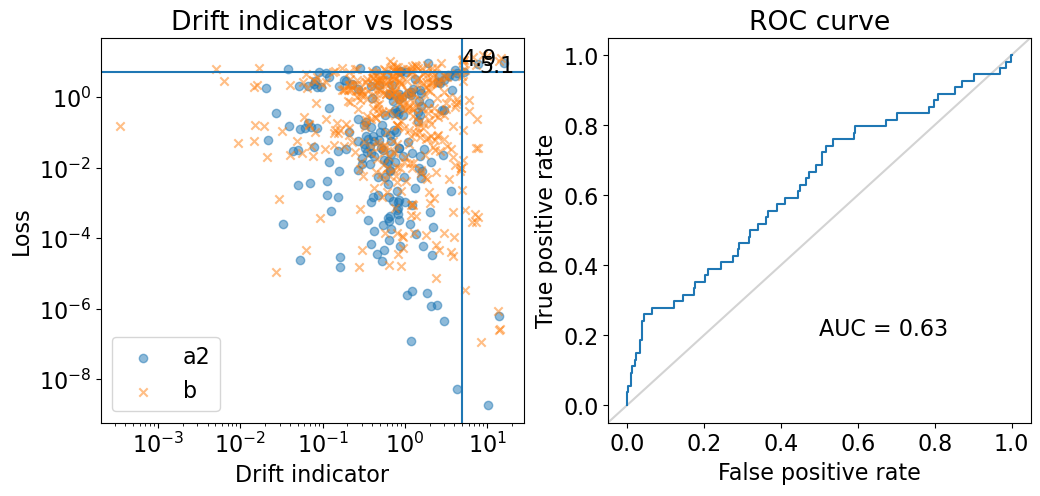} 
	}
 \newline
\noindent 
 \subfloat[{\sc german-credit}]{
	\includegraphics[width=0.45\textwidth]{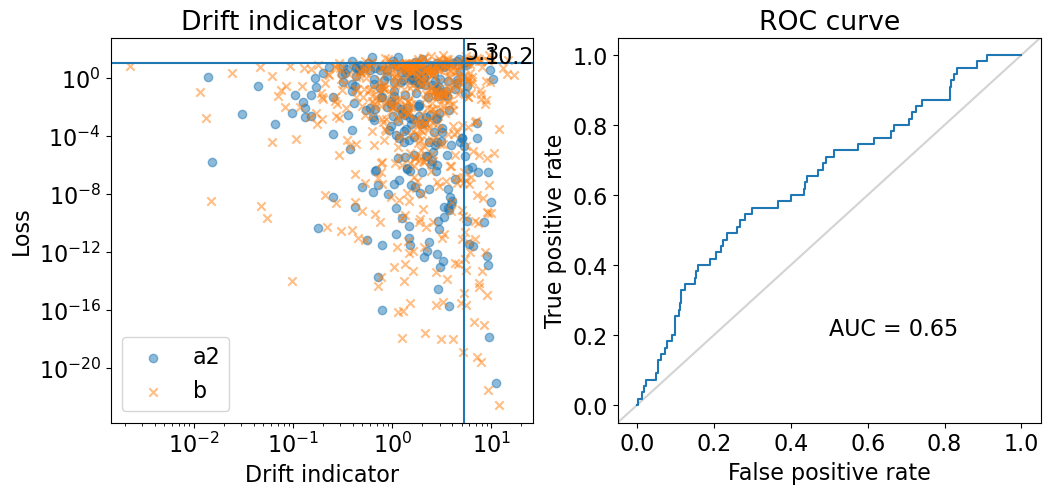} 
	}
 \subfloat[{\sc higgs-10k}]{
	\includegraphics[width=0.45\textwidth]{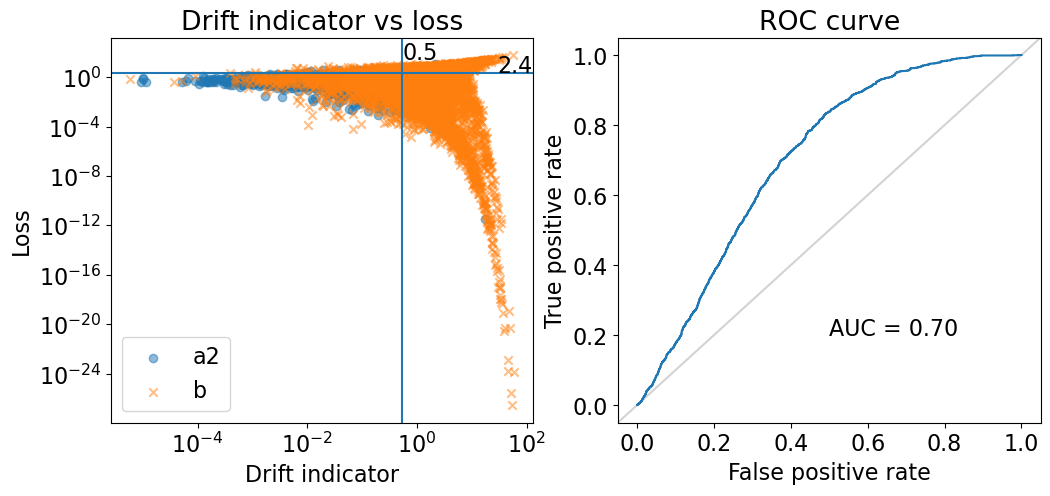} 
	}
 \newline
\noindent 
 \subfloat[{\sc eeg-eye-state}]{
	\includegraphics[width=0.45\textwidth]{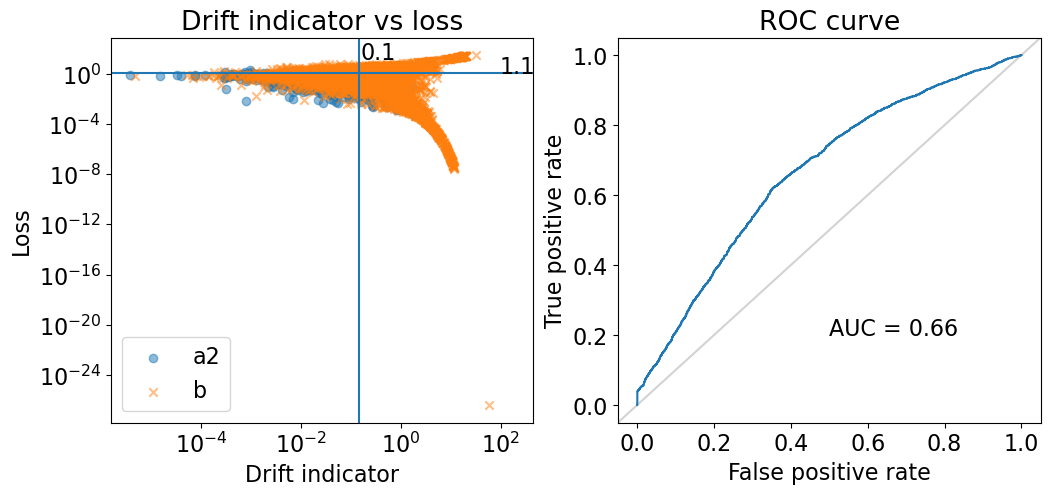} 
	}

 \caption{{\sc gbmap} drifter for classification datasets}
 \label{fig:gbmapdrifterclsall}
\end{figure}

\begin{figure}
     \centering
   \subfloat[{\sc breast-cancer}]{
	\includegraphics[width=0.45\textwidth]{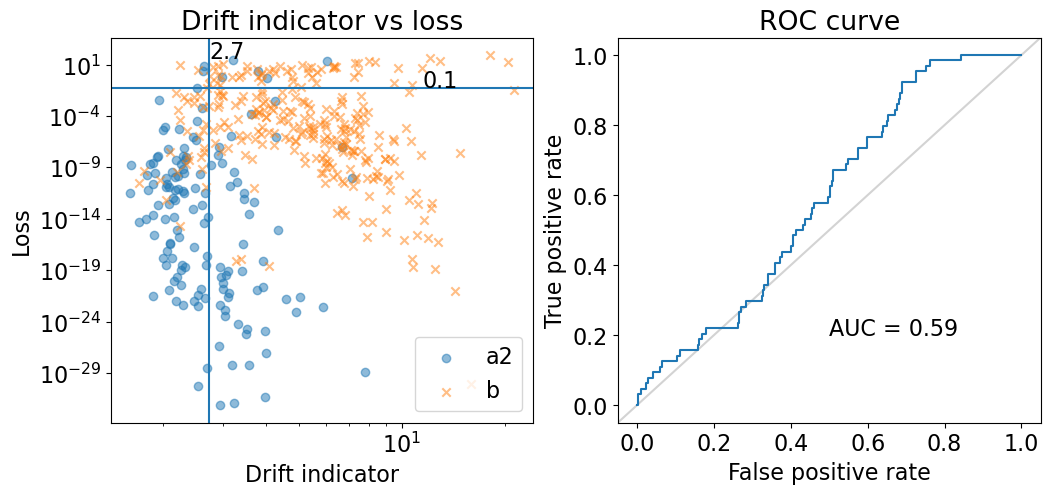} 
	}
    \subfloat[{\sc diabetes}]{
	\includegraphics[width=0.45\textwidth]{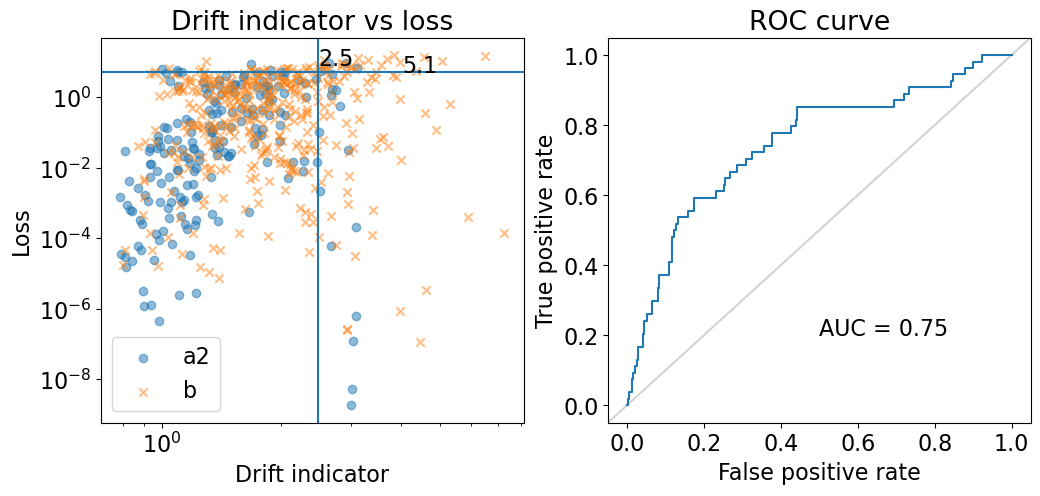} 
	}
\newline
\noindent 
 \subfloat[{\sc german-credit}]{
	\includegraphics[width=0.45\textwidth]{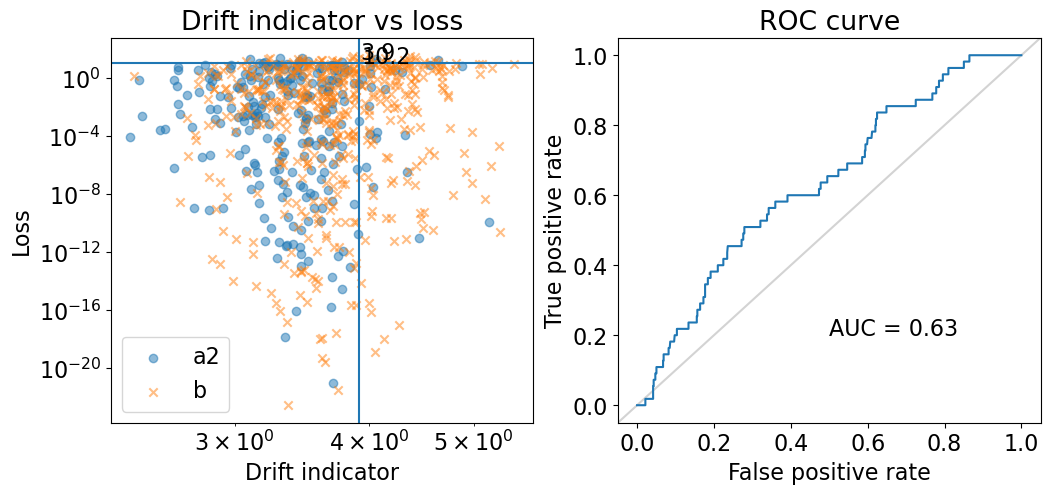} 
	}
    \subfloat[{\sc higgs-10k}]{
	\includegraphics[width=0.45\textwidth]{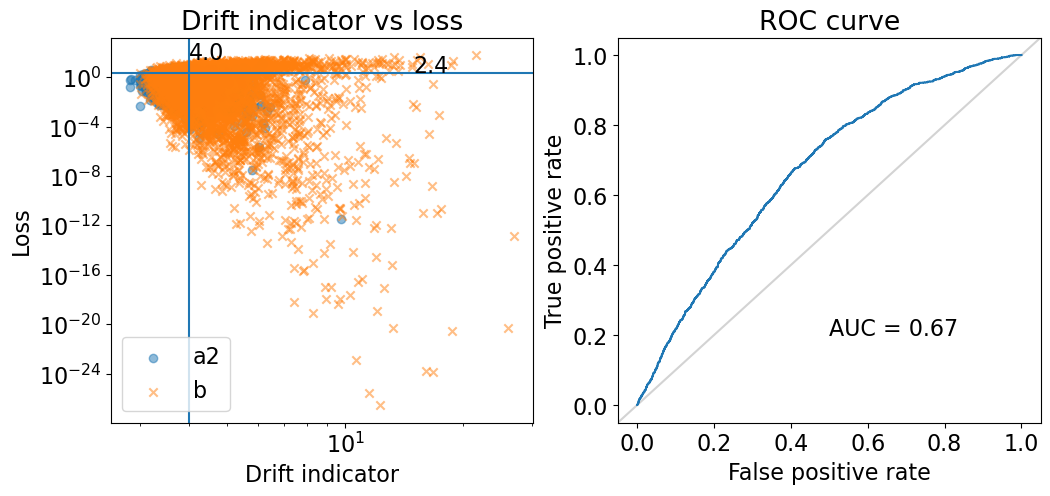} 
	}
\newline
\noindent 
 \subfloat[{\sc eeg-eye-state}]{
	\includegraphics[width=0.45\textwidth]{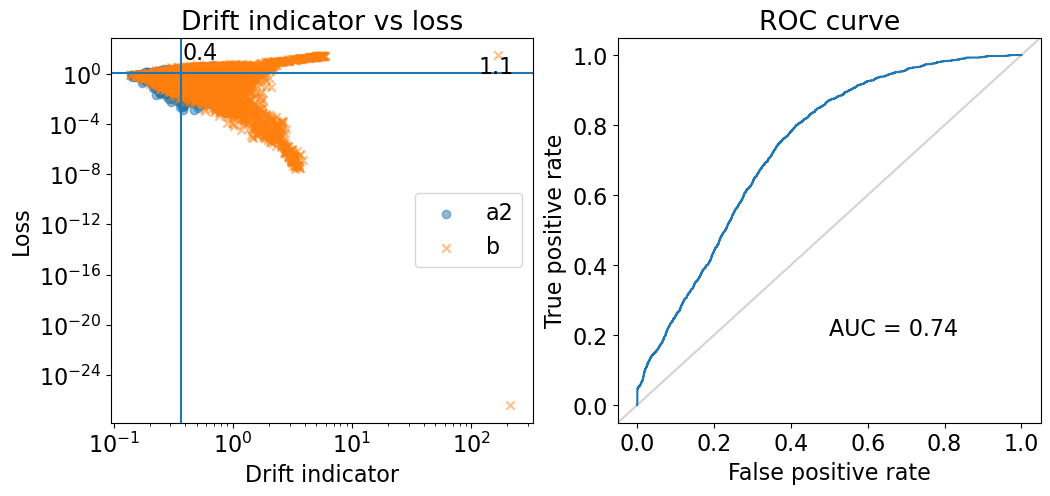} 
	}
 \caption{{\sc euclid} drifter for classification datasets}
 \label{fig:euclideandrifterclsall}
\end{figure}

\clearpage
\newpage
\section{Visualizations}\label{sec:vis2}

\begin{figure}[h]
    \centering
    \includegraphics[width=0.4\textwidth]{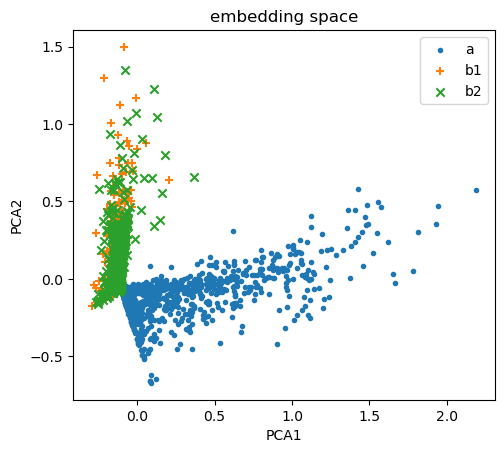}
    \includegraphics[width=0.4\textwidth]{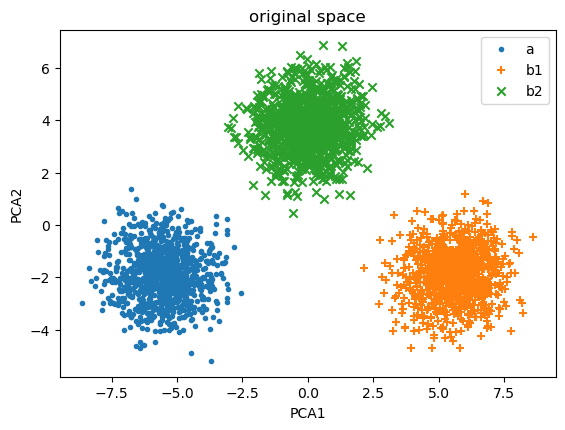}
    \caption{PCA of the embedding (left) and the original space (right).}
    \label{fig:vis}
\end{figure}

Here, we show an example of a visualization of the embedding space by PCA and compare it to the PCA visualization of the original data space, demonstrating some properties of the embedding and the embedding distance.

Consider an 8-dimensional (plus the intercept term) synthetic regression dataset, similar to {\sc synth-cos-r}, where there are three equal-sized clusters ({\tt a}, {\tt b1}, {\tt b2}) in the covariate space and where dimensions 1--4 are relevant for the regression task, and dimensions 5--8 are irrelevant for the regression task. Further, assume that the cluster {\tt a} differs from cluster {\tt b2} in the relevant dimensions 1--4 and the cluster {\tt b1} differs from {\tt b2} only in the irrelevant dimensions 5--8. We expect an unsupervised dimensionality reduction method, such as PCA, to show three clusters. However, if the PCA is done on the embedding space, clusters {\tt b1} and {\tt b2} should merge since their difference is irrelevant for the regression task and the irrelevant directions should be ignored by the embedding, which is what happens; see the visualization in  Fig. \ref{fig:vis}.

Specifically, the data matrix ${\bf X}\in{\mathbb{R}}^{3000\times 9}$ is constructed as follows, where ${\bf N}_{ij}$ are independent draws from a normal distribution with zero mean and unit variance:
\begin{equation}
{\bf X}_{ij}=\left\{
\begin{array}{lcl}
{\bf N}_{ij}+4 & , & i\in\{1,\ldots,1000\}\wedge j\in\{1,2,3,4\}\\
{\bf N}_{ij}+4 & , & i\in\{1001,\ldots,2000\}\wedge j\in\{5,6,7,8\}\\
1 &,& j=9\\
{\bf N}_{ij}&,&{\rm otherwise}
\end{array}
\right. 
\end{equation}
The regression target vector ${\bf y}\in{\mathbb{R}}^{3000}$ is given by
\begin{equation} 
{\bf y}=\cos{\left({\bf X}\right)}{\bf u},
\end{equation}
where ${\bf u}\in{\mathbb{R}}^9$ is a random unit vector satisfying ${\bf u}_5={\bf u}_6={\bf u}_7={\bf u}_8={\bf u}_9=0$ and $\cos()$ denotes element-wise cosine.

Cluster {\tt a} is composed of points ${\bf X}_{i\cdot}$ where $i\in\{1,\ldots,1000\}$, cluster {\tt b1} is composed of points ${\bf X}_{i\cdot}$ where $i\in\{1001,\ldots,2000\}$, and cluster {\tt b2} is composed of points ${\bf X}_{i\cdot}$ where $i\in\{2001,\ldots,3000\}$.

We have used the parameters $m=10$, $\lambda=10^{-3}$, and $\beta=5$ for {\sc gbmap}.

\end{document}